\newtheorem{mydef}{Definition}
\newtheorem{mylem}{Lemma}
\newtheorem{myprop}{Proposition}
\newcommand{\citet}[1]{\citeauthor{#1} \shortcite{#1}}
\DeclareMathOperator*{\argmax}{arg\,max}
\DeclareMathOperator*{\argmin}{arg\,min}
\newcommand{\abs}[1]{\left\lvert#1\right\rvert}
\newcommand\given[1][]{\,#1\vert\,}
\newcommand{\notd}{\ensuremath{\overline{d}}\xspace}
\newcommand{\Deltat}{\ensuremath{\widetilde{\Delta}}\xspace}
\newcommand{\rvars}[1]{\ensuremath{\mathbf{#1}}\xspace}
\newcommand{\Xs}{\rvars{X}}
\newcommand{\Ys}{\rvars{Y}}
\newcommand{\Zs}{\rvars{Z}}
\newcommand{\Vs}{\rvars{V}}
\newcommand{\Ws}{\rvars{W}}
\newcommand{\Ss}{\rvars{S}}
\newcommand{\Es}{\rvars{E}}
\newcommand{\dec}{\ensuremath{d}\xspace}
\newcommand{\decc}{\ensuremath{\bar{d}}\xspace}
\newcommand{\jstate}[1]{\ensuremath{\mathbf{#1}}\xspace}
\newcommand{\xs}{\jstate{x}}
\newcommand{\ys}{\jstate{y}}
\newcommand{\zs}{\jstate{z}}
\newcommand{\vs}{\jstate{v}}
\newcommand{\ws}{\jstate{w}}
\newcommand{\pa}[2]{%
  \ifthenelse{\isempty{#2}}%
    {\ensuremath{\theta_{#1}}\xspace}
    {\ensuremath{\theta_{#1 \given #2}}\xspace}
}
\newcommand{\data}{\ensuremath{\mathcal{D}}\xspace}
\DeclareMathOperator{\KL}{KL}
\DeclareMathOperator{\Div}{Div}
\newcommand{\guy}[1] {{\color{purple}Guy: #1 }}
\tikzstyle{bnarrow}=[
\tikzstyle{bayesnet}=[
\tikzstyle{bnnode}=[
\tikzstyle{cpt}=[
\title{Learning Fair Naive Bayes Classifiers by\\Discovering and Eliminating Discrimination Patterns}
\author{
    YooJung Choi,\textsuperscript{\rm 1}\thanks{Equal contribution}
    Golnoosh Farnadi,\textsuperscript{\rm 2,3}\footnotemark[1]
    Behrouz Babaki,\textsuperscript{\rm 4}\footnotemark[1]
    and Guy Van den Broeck\textsuperscript{\rm 1} \\
    \textsuperscript{\rm 1}University of California, Los Angeles,
    \textsuperscript{\rm 2}Mila,
     \textsuperscript{\rm 3}Universit\'e de Montr\'eal,
    \textsuperscript{\rm 4}Polytechnique Montr\'{e}al\\
    yjchoi@cs.ucla.edu,
    farnadig@mila.quebec,
    behrouz.babaki@polymtl.ca,
    guyvdb@cs.ucla.edu
}
\begin{document}

\maketitle

\begin{abstract}
    As machine learning is increasingly used to make real-world decisions, recent research efforts aim to define and ensure fairness in algorithmic decision making. Existing methods often assume a fixed set of observable features to define individuals, but lack a discussion of certain features not being observed at test time. In this paper, we study fairness of naive Bayes classifiers, which allow partial observations. In particular, we introduce the notion of a discrimination pattern, which refers to an individual receiving different classifications depending on whether some sensitive attributes were observed. Then a model is considered fair if it has no such pattern. We propose an algorithm to discover and mine for discrimination patterns in a naive Bayes classifier, and show how to learn maximum-likelihood parameters subject to these fairness constraints. Our approach iteratively discovers and eliminates discrimination patterns until a fair model is learned. An empirical evaluation on three real-world datasets demonstrates that we can remove exponentially many discrimination patterns by only adding a small fraction of them as constraints.
\end{abstract}

\section{Introduction}

With the increasing societal impact of machine learning come increasing concerns about the fairness properties of machine learning models and how they affect decision making.
For example, concerns about fairness come up in policing~\cite{mohler2018penalized}, recidivism prediction~\cite{chouldechova2017fair}, insurance pricing~\cite{kusner2017counterfactual}, hiring~\cite{datta2015automated}, and credit rating~\cite{henderson2015credit}.
The algorithmic fairness literature has proposed various solutions, from limiting the disparate treatment of similar individuals to giving statistical guarantees on how classifiers behave towards different populations.
Key approaches include individual fairness~\cite{dwork2012fairness,zemel2013learning}, statistical parity, disparate impact and group fairness~\cite{kamishima2012fairness,feldman2015certifying,chouldechova2017fair}, counterfactual fairness \cite{kusner2017counterfactual}, preference-based fairness \cite{zafar2017parity}, relational fairness \cite{farnadi2018fairness}, and equality of opportunity \cite{hardt2016equality}.
The goal in these works is usually to assure the fair treatment of individuals or groups that are identified by sensitive attributes.

In this paper, we study fairness properties of probabilistic classifiers that represent joint distributions over the features and decision variable.
In particular, Bayesian network classifiers treat the classification or decision-making task as a probabilistic inference problem: given observed features, compute the probability of the decision variable.
Such models have a unique ability that they can naturally handle missing features, by simply marginalizing them out of the distribution when they are not observed at prediction time.
Hence, a Bayesian network classifier effectively embeds exponentially many classifiers, one for each subset of observable features.
We ask whether such classifiers exhibit patterns of discrimination where similar individuals receive markedly different outcomes purely because they disclosed a sensitive attribute.

The first key contribution of this paper is an algorithm to verify whether a Bayesian classifier is fair, or else to mine the classifier for discrimination patterns. We propose two alternative criteria for identifying the most important discrimination patterns that are present in the classifier. We specialize our pattern miner to efficiently discover discrimination patterns in naive Bayes models using branch-and-bound search. These classifiers are often used in practice because of their simplicity and tractability, and they allow for the development of effective bounds.
Our empirical evaluation shows that naive Bayes models indeed exhibit vast numbers of discrimination patterns, and that our pattern mining algorithm is able to find them by traversing only a small fraction of the search~space.

The second key contribution of this paper is a parameter learning algorithm for naive Bayes classifiers that ensures that no discrimination patterns exist in the the learned distribution. We propose a signomial programming approach to eliminate individual patterns of discrimination during maximum-likelihood learning. Moreover, to efficiently eliminate the exponential number of patterns that could exist in a naive Bayes classifier, we propose a cutting-plane approach that uses our discrimination pattern miner to find and iteratively eliminate discrimination patterns until the entire learned model is fair. Our empirical evaluation shows that this process converges in a small number of iteration, effectively removing millions of discrimination patterns. Moreover, the learned fair models are of high quality, achieving likelihoods that are close to the best likelihoods attained by models with no fairness constraints. Our method also achieves higher accuracy than other methods of learning fair naive Bayes models.

\section{Problem Formalization}\label{sec:problem}
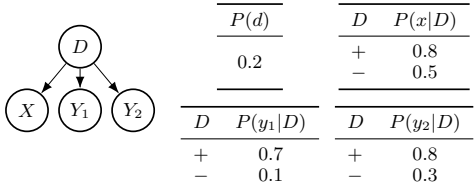
\begin{figure}[tb]
    \centering
    \scalebox{0.8}{
      \begin{tikzpicture}[bayesnet]
      
  \def\lone{15bp}
  \def\ltwo{-15bp}

  \node (D) at (-30bp,\lone) [bnnode] {$D$};
  \node (X) at (-55bp,\ltwo) [bnnode] {$X$};
  \node (Y1) at (-30bp,\ltwo) [bnnode] {$Y_1$};
  \node (Y2) at (-5bp,\ltwo) [bnnode] {$Y_2$};
  
  \begin{scope}[on background layer]
    \draw [bnarrow] (D) -- (X);
    \draw [bnarrow] (D) -- (Y1);
    \draw [bnarrow] (D) -- (Y2);
  \end{scope}
  
  \node[cpt] (d) at (50bp,15bp) {
    \footnotesize
    \begin{tabular}{c}
      \toprule
      $P(d)$\\\midrule
      \multirow{2}{*}{$0.2$}\\
      \\
      \bottomrule
    \end{tabular}
  };
  
  \node[cpt,below=0bp of d] (y1) {
    \footnotesize
    \begin{tabular}{lc}
      \toprule
      $D$ & $P(y_1|D)$\\\midrule
      $+$ & $0.7$\\
      $-$ & $0.1$\\
      \bottomrule
    \end{tabular}
  };
  
  \node[cpt,right=0bp of y1] (y2){
    \footnotesize
    \begin{tabular}{lc}
      \toprule
      $D$ & $P(y_2|D)$\\\midrule
      $+$ & $0.8$\\
      $-$ & $0.3$\\
      \bottomrule
    \end{tabular}
  };

  \node[cpt,above=0bp of y2] (x) {
    \footnotesize
    \begin{tabular}{lc}
      \toprule
      $D$ & $P(x|D)$\\\midrule
      $+$ & $0.8$\\
      $-$ & $0.5$\\
      \bottomrule
    \end{tabular}
  };  
    \end{tikzpicture}
    }
\caption{Naive Bayes classifier with a sensitive attribute $X$ and non-sensitive attributes $Y_1,Y_2$}\label{fig:nb}
\end{figure}

We use uppercase letters for random variables and lowercase letters for their assignments. Sets of variables and their joint assignments are written in bold. Negation of a binary assignment $x$ is denoted $\bar{x}$, and $\xs\!\models\! \ys$ means that $\xs$ logically implies $\ys$. Concatenation of sets $\mathbf{XY}$ denotes their~union.

Each individual is characterized by an assignment to a set of discrete variables $\Zs$, called attributes or features. Assignment $d$ to a binary decision variable $D$ represents a decision made in favor of the individual (e.g., a loan approval). A set of \emph{sensitive attributes} $\Ss \subset \Zs$ specifies a group of entities protected often by law, such as gender and race.
We now define the notion of a discrimination pattern.

\begin{mydef}
    Let $P$ be a distribution over $D \cup \Zs$. Let $\xs$ and $\ys$ be joint assignments to $\Xs\! \subseteq\! \Ss$ and $\Ys\! \subseteq \!\Zs \!\setminus\! \Xs$,  respectively.
    The \emph{degree of discrimination} of $\xs\ys$ is:
    $$\Delta_{P,d}(\xs,\ys) \triangleq P(d \given \xs\ys) - P(d \given \ys).$$
\end{mydef}
The assignment $\ys$ identifies a group of similar individuals, and the degree of discrimination quantifies how disclosing sensitive information $\xs$ affects the decision for this group.
Note that sensitive attributes missing from $\xs$ can still appear in $\ys$.
We drop the subscripts $P,d$ when clear from context. 

\begin{mydef}
    Let $P$ be a distribution over $D \cup \Zs$, and $\delta \in [0,1]$ a threshold.
    Joint assignments $\xs$ and $\ys$ form a \emph{discrimination pattern} w.r.t.\ $P$ and $\delta$ if: (1) $\Xs\!\subseteq\!\Ss$ and $\Ys\!\subseteq\!\Zs \!\setminus\! \Xs$; and (2) $\abs{\Delta_{P,d}(\xs,\ys)} > \delta$.
\end{mydef}
Intuitively, we do not want information about the sensitive attributes to significantly affect the probability of getting a favorable decision. 
Let us consider two special cases of discrimination patterns. 
First, if $\Ys\!=\!\emptyset$, then a small discrimination score $\abs{\Delta(\xs,\emptyset)}$ can be interpreted as an approximation of statistical parity, which is achieved when $P(d\given\xs) = P(d)$.
For example, the naive Bayes network in Figure~\ref{fig:nb} satisfies approximate parity for $\delta\!=\!0.2$ as $\abs{\Delta(x,\emptyset)}\!=\!0.086 \leq \delta$ and $\abs{\Delta(\bar{x},\emptyset)}\!=\!0.109 \leq \delta$.
Second, suppose $\Xs\!=\!\Ss$ and $\Ys\!=\!\Zs\!\setminus\!\Ss$. Then bounding $\abs{\Delta(\xs,\ys)}$ for all joint states $\xs$ and $\ys$ is equivalent to enforcing individual fairness assuming two individuals are considered similar if their non-sensitive attributes $\ys$ are equal.
The network in Figure~\ref{fig:nb} is also individually fair for $\delta=0.2$ because $\max_{x y_1 y_2} \abs{\Delta(x,y_1 y_2)}\!=\!0.167 \leq \delta$.\footnote{The highest discrimination score is observed at $\bar{x}$ and $y_1\bar{y_2}$, with $\Delta(\bar{x},y_1\bar{y_2}) = -0.167$.}
We discuss these connections more in Section~\ref{sec:related}.

Even though the example network is considered (approximately) fair at the group level nor at the individual level with fully observed features, it may still produce a discrimination pattern. In particular, $\abs{\Delta(\bar{x},y_1)}\!=\!0.225 > \delta$.
That is, a person with $\bar{x}$ and $y_1$ observed and the value of $Y_2$ undisclosed would receive a much more favorable decision had they not disclosed $X$ as well. Hence, naturally we wish to ensure that there exists no discrimination pattern across all subsets of observable features.

\begin{mydef}
    A distribution $P$ is \emph{$\delta$-fair} if there exists no discrimination pattern w.r.t $P$ and $\delta$.
\end{mydef}
Although our notion of fairness applies to any distribution, finding discrimination patterns can be computationally challenging: computing the degree of discrimination involves probabilistic inference, which is hard in general, and a given distribution may have exponentially many patterns.
In this paper, we demonstrate how to discover and eliminate discrimination patterns of a naive Bayes classifier effectively by exploiting its independence assumptions.
Concretely, we answer the following questions: (1) Can we certify that a classifier is $\delta$-fair?; (2) If not, can we find the most important discrimination patterns?; (3) Can we learn a naive Bayes classifier that is entirely~$\delta$-fair?


\section{Discovering Discrimination Patterns and Verifying $\delta$-fairness}
\label{sec:extractConstraints}

\begin{algorithm}[t]
\caption{\footnotesize $\textsc{Disc-Patterns}(\xs,\ys,\Es)$}\label{alg:disc-patterns}
{\footnotesize
\textbf{Input:} 
  $P$ : Distribution over $D\cup\Zs$, ~~~~$\delta$ : discrimination threshold  \hfill
\textbf{Output:} Discrimination patterns $L$ \\
\textbf{Data:} $\xs \leftarrow \emptyset$, $\ys \leftarrow \emptyset$, $\Es \leftarrow \emptyset$, $L \leftarrow []$ }
\vspace{0.05cm}
\hrule
\vspace{0.05cm}
{\footnotesize
  \begin{algorithmic}[1]
    \For {all assignments $z$ to some selected variable $Z \in \Zs \setminus \Xs\Ys\Es$~~}
        \If {$Z \in \Ss$}
            \If {$\abs{\Delta(\xs z, \ys)} > \delta$} add $(\xs z, \ys)$ to $L$ \label{line:check-delta-s} \EndIf
            \If {$\text{UB}(\xs z, \ys,\Es) > \delta$} $\textsc{Disc-Patterns}(\xs z,\ys,\Es)$ \label{line:check-ub-s} \EndIf
        \EndIf
        \If {$\abs{\Delta(\xs, \ys z)} > \delta$} add $(\xs, \ys z)$ to $L$ \EndIf \label{line:check-delta} 
        \If {$\text{UB}(\xs, \ys z,\Es) > \delta$} $\textsc{Disc-Patterns}(\xs,\ys z,\Es)$ \EndIf
    \EndFor
    \If {$\text{UB}(\xs, \ys,\Es\cup\{Z\}) > \delta$} $\textsc{Disc-Patterns}(\xs,\ys,\Es\cup\{Z\})$ \EndIf \label{line:check-ub}
  \end{algorithmic}}
\end{algorithm}

This section describes our approach to finding discrimination patterns or checking that there are none.

\subsection{Searching for Discrimination Patterns}
One may naively enumerate all possible patterns and compute their degrees of discrimination. However, this would be very inefficient as there are exponentially many subsets and assignments to consider.
We instead use branch-and-bound search to more efficiently decide if a model is fair.

Algorithm~\ref{alg:disc-patterns} finds discrimination patterns. It recursively adds variable instantiations and checks the discrimination score at each step. If the input distribution is $\delta$-fair, the algorithm returns no pattern; otherwise, it returns the set of all discriminating patterns.
Note that computing $\Delta$ requires probabilistic inference on distribution $P$. This can be done efficiently for large classes of graphical models~\cite{darwiche2009modeling,poon2011sum,dechter2013reasoning,Rahman2014Cutset,kisa2014probabilistic}, and particularly for naive Bayes networks, which will be our main focus.

Furthermore, the algorithm relies on a good upper bound to prune the search tree and avoid enumerating all possible patterns. Here, $\text{UB}(\xs,\ys,\Es)$ bounds the degree of discrimination achievable by observing more features after $\xs\ys$ while excluding features $\Es$.
\begin{myprop}\label{prop:diff-bound}
    Let $P$ be a naive Bayes distribution over $D\cup\Zs$, and let $\xs$ and $\ys$ be joint assignments to $\Xs\!\subseteq\! \Ss$ and $\Ys\! \subseteq\! \Zs\!\setminus\!\Xs$. Let $\xs_u^\prime$ (resp.\ $\xs_l^\prime$) be an assignment to $\Xs^\prime\!=\!\Ss\!\setminus\!\Xs$ that maximizes (resp.\ minimizes) $P(d\given\xs\xs^\prime)$.
    Suppose $l,u \in [0,1]$ such that $l \leq P(d \given \ys \ys^\prime) \leq u$ for all possible assignments~$\ys^\prime$ to $\Ys^\prime\!=\!\Zs \!\setminus\! (\Xs\Ys)$. 
    Then the degrees of discrimination for all patterns $\xs\xs^\prime\ys\ys^\prime$ that extend $\xs\ys$ are bounded as follows:
    \begin{align*}
        &\min_{l\leq\gamma\leq u} \Deltat\left(P(\xs\xs_l^\prime\given d),P(\xs\xs_l^\prime\given\notd),\gamma\right)
        \leq
        \Delta_{P,d}(\xs\xs^\prime,\ys\ys^\prime) \\
        &\quad\leq 
        \max_{l\leq\gamma\leq u} \Deltat\left(P(\xs\xs_u^\prime\given d),P(\xs\xs_u^\prime\given\notd),\gamma\right),
    \end{align*}
    where $\Deltat(\alpha,\beta,\gamma) \triangleq \frac{\alpha\gamma}{\alpha\gamma + \beta(1-\gamma)} - \gamma$.
\end{myprop}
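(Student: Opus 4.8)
The plan is to reduce the bound on all extensions $\xs\xs^\prime\ys\ys^\prime$ to two \emph{decoupled} one-dimensional optimizations — one over the remaining sensitive attributes $\Xs^\prime$, one over the scalar $\gamma=P(d\given\ys\ys^\prime)$ — by first writing the degree of discrimination of an extension in closed form.

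First I would expand $\Delta_{P,d}(\xs\xs^\prime,\ys\ys^\prime)$ using the naive Bayes structure. Since $\Xs$, $\Xs^\prime$, $\Ys$, $\Ys^\prime$ partition $\Zs$ and every attribute is conditionally independent of the rest given $D$, Bayes' rule yields
\begin{equation*}
P(d\given\xs\xs^\prime\ys\ys^\prime) = \frac{\gamma\,P(\xs\xs^\prime\given d)}{\gamma\,P(\xs\xs^\prime\given d) + (1-\gamma)\,P(\xs\xs^\prime\given\notd)}, \quad \gamma \triangleq P(d\given\ys\ys^\prime),
\end{equation*}
after canceling the common factor $P(\ys\ys^\prime)$. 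Hence $\Delta_{P,d}(\xs\xs^\prime,\ys\ys^\prime)$ equals $\Deltat\bigl(P(\xs\xs^\prime\given d),P(\xs\xs^\prime\given\notd),\gamma\bigr)$ exactly. At this point $\ys^\prime$ enters only through $\gamma$, which lies in $[l,u]$ by assumption, and $\xs^\prime$ enters only through the pair $\bigl(P(\xs\xs^\prime\given d),P(\xs\xs^\prime\given\notd)\bigr)$.

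Next I would establish the monotonicity that makes $\xs_u^\prime$ and $\xs_l^\prime$ the right choices. For fixed $\gamma\in[0,1]$, $\Deltat(\alpha,\beta,\gamma)$ depends on $\alpha,\beta$ only through the ratio $r=\alpha/\beta$, since $\frac{\alpha\gamma}{\alpha\gamma+\beta(1-\gamma)}=\frac{r\gamma}{r\gamma+1-\gamma}$, and it is nondecreasing in $r$ because its $r$-derivative is $\gamma(1-\gamma)/(r\gamma+1-\gamma)^2\ge 0$. The posterior $P(d\given\xs\xs^\prime)=\bigl(1+\tfrac{P(\notd)}{P(d)}\tfrac{1}{r}\bigr)^{-1}$ is governed by the \emph{same} ratio $r=P(\xs\xs^\prime\given d)/P(\xs\xs^\prime\given\notd)$ and is likewise nondecreasing in it. Therefore the assignment $\xs_u^\prime$ maximizing $P(d\given\xs\xs^\prime)$ over $\Xs^\prime$ also maximizes $\Deltat\bigl(P(\xs\xs^\prime\given d),P(\xs\xs^\prime\given\notd),\gamma\bigr)$ simultaneously for every $\gamma$, and $\xs_l^\prime$ minimizes it.

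Finally I would assemble the claim: for any extension, $\Delta_{P,d}(\xs\xs^\prime,\ys\ys^\prime)=\Deltat\bigl(P(\xs\xs^\prime\given d),P(\xs\xs^\prime\given\notd),\gamma\bigr)\le\Deltat\bigl(P(\xs\xs_u^\prime\given d),P(\xs\xs_u^\prime\given\notd),\gamma\bigr)$, and since $\gamma\in[l,u]$ this is at most $\max_{l\le\gamma\le u}\Deltat\bigl(P(\xs\xs_u^\prime\given d),P(\xs\xs_u^\prime\given\notd),\gamma\bigr)$; the lower bound is symmetric with $\xs_l^\prime$. The routine pieces are the Bayes'-rule rewrite and the single-variable calculus, and I would dispose of the degenerate cases $P(d)\in\{0,1\}$ and $P(\xs\xs^\prime)=0$ in a line. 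The one subtlety worth stating carefully is that both $P(d\given\xs\xs^\prime)$ and $\Deltat(\cdot,\cdot,\gamma)$ are controlled by the same scalar $P(\xs\xs^\prime\given d)/P(\xs\xs^\prime\given\notd)$, so extremizing the posterior over $\Xs^\prime$ is exactly what extremizes $\Deltat$ uniformly in $\gamma$; the resulting bound, being a combination of two independent extrema, remains valid even though the extremal $\gamma$ need not be realized by any actual $\ys^\prime$.
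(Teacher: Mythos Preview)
Your proposal is correct and follows essentially the same route as the paper: rewrite $\Delta_{P,d}(\xs\xs',\ys\ys')$ as $\Deltat\bigl(P(\xs\xs'\given d),P(\xs\xs'\given\notd),P(d\given\ys\ys')\bigr)$, then bound the $\ys'$-part by ranging $\gamma$ over $[l,u]$ and the $\xs'$-part by picking the extremizer of $P(d\given\xs\xs')$. The only cosmetic difference is that the paper phrases the $\xs'$-step through Lemma~\ref{lem:nb-max} (the argmax of $P(d\given\xs\xs'\ys\ys')$ is independent of $\ys\ys'$), whereas you argue directly that both $P(d\given\xs\xs')$ and $\Deltat(\cdot,\cdot,\gamma)$ are monotone in the same likelihood ratio $P(\xs\xs'\given d)/P(\xs\xs'\given\notd)$; these are the same observation.
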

Here, $\Deltat:[0,1]^3\to[0,1]$ is introduced to relax the discrete problem of minimizing or maximizing the degree of discrimination into a continuous one. In particular, $\Deltat\left(P(\xs\vert d), P(\xs\vert\notd), P(d\vert\ys)\right)$ equals the degree of discrimination $\Delta(\xs,\ys)$. This relaxation allows us to compute bounds efficiently, as closed-form solutions. We refer to the Appendix for full proofs and details.

To apply above proposition, we need to find $\xs_u^\prime, \xs_l^\prime, l, u$ by maximizing/minimizing $P(d\vert\xs\xs^\prime)$ and $P(d\vert\ys\ys^\prime)$ for a given pattern $\xs\ys$.
Fortunately, this can be done efficiently for naive Bayes classifiers.

\begin{mylem}\label{lem:nb-max}
    Given a naive Bayes distribution $P$ over $D\!\cup\!\Zs$, a subset $\Vs\!=\! \{V_i\}_{i=1}^n \!\subset\!\Zs$, and an assignment $\ws$ to $\Ws\!\subseteq\! \Zs\!\setminus\!\Vs$, we have:
    $\argmax_\vs P(d \vert \vs\ws) = \left\{\argmax_{v_i} P(v_i \vert d) / P(v_i \vert \notd) \right\}_{i=1}^n$.
\end{mylem}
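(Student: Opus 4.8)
The plan is to rewrite $P(d\given\vs\ws)$ as a strictly monotone function of a product that factorizes over the variables in $\Vs$, and then optimize each factor independently.

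First I would apply Bayes' rule together with the naive Bayes factorization. Since all attributes are mutually independent given $D$, we have $P(\vs\ws\given d) = \prod_{i=1}^n P(v_i\given d)\cdot\prod_j P(w_j\given d)$, and likewise for \notd. Writing $P(d\given\vs\ws) = \bigl(1 + \tfrac{P(\notd)\,P(\vs\ws\given\notd)}{P(d)\,P(\vs\ws\given d)}\bigr)^{-1}$, I would note that this is a strictly decreasing function of the ratio $r(\vs) \triangleq \frac{P(\notd)\,P(\vs\ws\given\notd)}{P(d)\,P(\vs\ws\given d)}$ (assuming $P(d),P(\notd)>0$; otherwise the conditional is degenerate and the claim is vacuous). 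Hence $\argmax_\vs P(d\given\vs\ws) = \argmin_\vs r(\vs)$.

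Next I would pull out of $r(\vs)$ the factors that do not depend on $\vs$, obtaining $r(\vs) = C\cdot\prod_{i=1}^n \frac{P(v_i\given\notd)}{P(v_i\given d)}$ with $C = \frac{P(\notd)\prod_j P(w_j\given\notd)}{P(d)\prod_j P(w_j\given d)}$ a nonnegative constant. Because the $v_i$ range over disjoint variables and each factor is nonnegative, minimizing the product separates: $\min_\vs \prod_i \frac{P(v_i\given\notd)}{P(v_i\given d)} = \prod_i \min_{v_i}\frac{P(v_i\given\notd)}{P(v_i\given d)}$. Finally, minimizing $P(v_i\given\notd)/P(v_i\given d)$ is the same as maximizing the reciprocal $P(v_i\given d)/P(v_i\given\notd)$, which yields exactly the claimed form $\left\{\argmax_{v_i} P(v_i\given d)/P(v_i\given\notd)\right\}_{i=1}^n$.

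The main obstacle is boundary handling rather than the algebra: when some parameters vanish (e.g.\ $P(v_i\given\notd)=0$, making a ratio infinite, or $P(v_i\given d)=0$, making $r(\vs)$ infinite and $P(d\given\vs\ws)=0$), division is not literally defined and the phrase ``strictly decreasing'' needs care. I would dispatch this using the extended reals, setting $x/0=+\infty$ for $x>0$, observing that $t\mapsto 1/(1+t)$ extends monotonically and continuously to $[0,+\infty]$, and checking that ``minimize $P(v_i\given\notd)/P(v_i\given d)$'' and ``maximize $P(v_i\given d)/P(v_i\given\notd)$'' select the same value of $v_i$ in every case (including when both quantities are $0$). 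I would also remark that the maximizer need not be unique: the statement is to be read as asserting that any tuple formed by per-variable maximizers is a global maximizer of $P(d\given\vs\ws)$.
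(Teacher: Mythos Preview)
Your proposal is correct and follows essentially the same route as the paper: rewrite $P(d\given\vs\ws)$ as $\bigl(1+\tfrac{P(\notd)P(\vs\ws\given\notd)}{P(d)P(\vs\ws\given d)}\bigr)^{-1}$, use monotonicity to turn the $\argmax$ into an $\argmin$ of the likelihood ratio, and then exploit the naive Bayes factorization to optimize each $V_i$ separately. The only cosmetic difference is that the paper reduces to a single variable $V$ (leaving the extension to $n$ variables implicit), whereas you factor all $n$ variables at once; your added remarks on zero-probability edge cases and non-uniqueness go beyond what the paper spells out but do not change the argument.
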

That is, the joint observation $\vs$ that will maximize the probability of the decision can be found by optimizing each variable $V_i$ independently; the same holds when minimizing.
Hence, we can use Proposition~\ref{prop:diff-bound} to compute upper bounds on discrimination scores of extended patterns in linear time.

\subsection{Searching for Top-$k$ Ranked Patterns} \label{sec:most-disc-pattern}
If a distribution is significantly unfair, Algorithm~\ref{alg:disc-patterns} may return exponentially many discrimination patterns. This is not only very expensive but makes it difficult to interpret the discrimination patterns.
Instead, we would like to return a smaller set of ``interesting'' discrimination patterns. 

An obvious choice is to return a small number of discrimination patterns with the highest absolute degree of discrimination.
Searching for the $k$ most discriminating patterns can be done with a small modification to Algorithm~\ref{alg:disc-patterns}. First, the size of list $L$ is limited to $k$. The conditions in Lines~\ref{line:check-delta-s}--\ref{line:check-ub} are modified to check the current discrimination score and upper bounds against the smallest discrimination score of patterns in $L$, instead of the threshold $\delta$.

Nevertheless, ranking patterns by their discrimination score may return patterns of very low probability.
For example, the most discriminating pattern of a naive Bayes classifier learned on the COMPAS dataset\footnote{\url{https://github.com/propublica/compas-analysis}
} has a high discrimination score of 0.42, but only has a 0.02\% probability of occurring.\footnote{The corresponding pattern is $\xs\!=\!\{\text{White}, \text{Married}, \text{Female}, \allowbreak {>\!30\text{ y/o}\}}, \ys\!=\!\{\text{Probation, Pretrial}\}$.}
The probability of a discrimination pattern denotes the proportion of the population (according to the distribution) that could be affected unfairly, and thus a pattern with extremely low probability could be of lesser interest.
To address this concern, we propose a more sophisticated ranking of the discrimination patterns that also takes into account the probabilities of patterns.
\begin{mydef}
    Let $P$ be a distribution over $D \cup \Zs$. Let $\xs$ and $\ys$ be joint instantiations to subsets $\Xs \subseteq \Ss$ and $\Ys \subseteq \Zs\setminus\Xs$,  respectively.
    The \emph{divergence score} of $\xs\ys$ is:
    \begin{align}
        \Div_{P,d,\delta}(\xs,\ys) \triangleq~ \min_Q\: &\KL\left(P \;\middle\|\; Q\right) \label{eq:kld} \\
        \text{ s.t. } & \abs{\Delta_{Q,d}(\xs,\ys)} \leq \delta \nonumber \\
        & P(d\zs) = Q(d\zs), \:\forall\: d\zs \not\models \xs\ys \nonumber
    \end{align}
    where $\KL\left(P\;\middle\|\; Q\right) = \sum_{d,\zs} P(d\zs) \log(P(d\zs)/Q(d\zs))$.
\end{mydef}
The divergence score assigns to a pattern $\xs \ys$ the minimum Kullback-Leibler (KL) divergence between current distribution $P$ and a hypothetical distribution $Q$ that is fair on the pattern $\xs\ys$ and differs from $P$ only on the assignments that satisfy the pattern (namely $d\xs\ys$ and $\overline{d}\xs\ys$).
Informally, the divergence score approximates how much the current distribution $P$ needs to be changed in order for $\xs\ys$ to no longer be a discrimination pattern.
Hence, patterns with higher divergence score will tend to have not only higher discrimination score but also higher probabilities. 

For instance, the pattern with the highest divergence score
\footnote{$\xs=\{\text{Married, $>30$ y/o}\}$, $\ys=\{\}$.}
on the COMPAS dataset has a discrimination score of 0.19 which is not insignificant, but also has a relatively high probability of 3.33\% -- more than two orders of magnitude larger than that of the most discriminating pattern. Therefore, such a general pattern could be more interesting for the user studying this classifier. 


To find the top-$k$ patterns with the divergence score, we need to be able to compute the score and its upper bound efficiently. The key insights are that KLD is convex and that $Q$, in Equation~\ref{eq:kld}, can freely differ from $P$ only on one probability value (either that of $d\xs\ys$ or $\notd\xs\ys$). Then:
\begin{align}
    \Div_{P,d,\delta}(\xs,\ys) 
    = &P(d\xs\ys) \log\left( \frac{P(d\xs\ys)}{P(d\xs\ys)+r} \right) \nonumber \\
    &+ P(\notd\xs\ys) \log\left( \frac{P(\notd\xs\ys)}{P(\notd\xs\ys)-r} \right), \label{eq:kld-r}
\end{align}
where $r=0$ if $\abs{\Delta_{P,d}(\xs,\ys)}\! \leq \!\delta$; $r=\frac{\delta - \Delta_{P,d}(\xs,\ys)}{ 1/P(\xs\ys) - 1/P(\ys)}$ if $\Delta_{P,d}(\xs,\ys)\! > \!\delta$; and $r=\frac{-\delta - \Delta_{P,d}(\xs,\ys)}{ 1/P(\xs\ys) - 1/P(\ys)}$ if $\Delta_{P,d}(\xs,\ys)\! < \!-\delta$.
Intuitively, $r$ represents the minimum necessary change to $P(d\xs\ys)$ for $\xs\ys$ to be non-discriminating in the new distribution.
Note that the smallest divergence score of 0 is attained when the pattern is already fair.

Lastly, we refer to the Appendix for two upper bounds of the divergence score, which utilize the bound on discrimination score of Proposition~\ref{prop:diff-bound} and can be computed efficiently using Lemma~\ref{lem:nb-max}.

\begin{table*}[tb]
\centering
\scalebox{0.76}{
\begin{tabular}{l r r r r | r | rrr |rrr }
\toprule
    \multicolumn{5}{c|}{Dataset Statistics} & & \multicolumn{6}{c}{Proportion of search space explored} \\
       \multicolumn{5}{c|}{} & & \multicolumn{3}{c}{Divergence} & \multicolumn{3}{|c}{Discrimination} \\
   Dataset  & Size & $S$ & $N$ & \# Pat.  & $k$  &      $\delta=0.01$ &     $\delta=0.05$ &     $\delta= 0.10$ &      $\delta= 0.01$ &      $\delta= 0.05$ &      $\delta=0.10$ \\
\midrule
COMPAS & 48,834 & 4 & 3 & 15K & 1   & 6.387e-01 & 5.634e-01 & 3.874e-01 &  8.188e-03 & 8.188e-03 & 8.188e-03 \\
                                &&&&& 10  & 7.139e-01 & 5.996e-01 & 4.200e-01 &  3.464e-02 & 3.464e-02 & 3.464e-02 \\
                                &&&&& 100 & 8.222e-01 & 6.605e-01 & 4.335e-01 &  9.914e-02 & 9.914e-02 & 9.914e-02 \\
\midrule
Adult & 32,561 & 4 & 9 & 11M & 1   & 3.052e-06 & 7.260e-06 & 1.248e-05 &  2.451e-04 & 2.451e-04 & 2.451e-04 \\
                         &&&&& 10  & 7.030e-06 & 1.154e-05 & 1.809e-05 &  2.467e-04 & 2.467e-04 & 2.467e-04 \\
                         &&&&& 100 & 1.458e-05 & 1.969e-05 & 2.509e-05 &  2.600e-04 & 2.600e-04 & 2.597e-04 \\
\midrule
German & 1,000 & 4 & 16 & 23B   & 1   & 5.075e-07 & 2.731e-06 & 2.374e-06 &  7.450e-08 & 7.450e-08 & 7.450e-08 \\
                            &&&&& 10  & 9.312e-07 & 3.398e-06 & 2.753e-06 &  1.592e-06 & 1.592e-06 & 1.592e-06 \\
                            &&&&& 100 & 1.454e-06 & 4.495e-06 & 3.407e-06 &  5.897e-06 & 5.897e-06 & 5.897e-06 \\
\bottomrule
\end{tabular}
}
\caption{Data statistics (number of training instances, sensitive features $S$, non-sensitive features $N$, and potential patterns) and the proportion of patterns explored during the search, using the \emph{Divergence} and \emph{Discrimination} scores as rankings.}
\label{tab:ratio}
\end{table*}

\subsection{Empirical Evaluation of Discrimination Pattern Miner}
\label{sec:exp-miner}

In this section, we report the experimental results on the performance of our pattern mining algorithms. All experiments were run on an AMD Opteron 275 processor (2.2GHz) and 4GB of RAM running
Linux Centos 7. Execution time is limited to 1800 seconds.

\textbf{Data and pre-processing.} We use three datasets: The \emph{Adult} dataset and \emph{German} dataset are used for predicting income level and credit risk, respectively, and are obtained from the UCI machine learning repository\footnote{\url{https://archive.ics.uci.edu/ml}}; the \emph{COMPAS} dataset is used for predicting recidivism.
These datasets have been commonly studied regarding fairness and were shown to exhibit some form of discrimination by several previous works~\cite{luong2011k,larson2016we,tramer2017fairtest,salimi2019interventional}.
As pre-processing, we removed unique features (e.g. names of individuals) and duplicate features.\footnote{The processed data, code, and Appendix are available at \url{https://github.com/UCLA-StarAI/LearnFairNB}.} See Table~\ref{tab:ratio} for a summary.

\textbf{Q1. Does our pattern miner find discrimination patterns more efficiently than by enumerating all possible patterns?} 
We answer this question by inspecting the fraction of all possible patterns that our pattern miner visits during the search. Table~\ref{tab:ratio} shows the results on three datasets, using two rank heuristics (discrimination and divergence) and three threshold values (0.01, 0.05, and 0.1). The results are reported for mining the top-$k$ patterns when $k$ is 1, 10, and 100. A naive method has to enumerate all possible patterns to discover the discriminating ones, while our algorithm visits only a small fraction of patterns (e.g., one in every several millions on the German dataset). 

\begin{figure}[tb]
    \centering
    \scalebox{0.85}{
        \includegraphics[width=\linewidth]{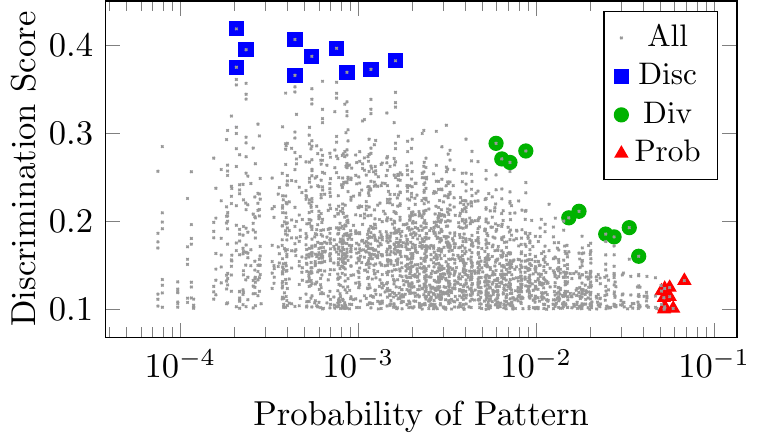}
    }
    \caption{Discrimination patterns with $\delta = 0.1$ for the max-likelihood NB classifier on COMPAS.}
    \label{fig:kld-diff}
\end{figure}

\textbf{Q2. Does the divergence score find discrimination patterns with both a high discrimination score and high probability?} 
Figure~\ref{fig:kld-diff} shows the \emph{probability} and \emph{discrimination score} of all patterns in the COMPAS dataset. The top-10 patterns according to three measures (discrimination score, divergence score, and probability) are highlighted in the figure. The observed trade-off between probability and discrimination score indicates that picking the top patterns according to each measure will yield low quality patterns according to the other measure. The divergence score, however, balances the two measures and returns patterns that have high probability and discrimination scores.
Also observe that the patterns selected by the divergence score lie in the Pareto front. This in fact always holds by the definition of this heuristic; fixing the probability and increasing the discrimination score will also increase the divergence score, and vice versa.

\section{Learning Fair Naive Bayes Classifiers}

\label{sec:learning}

We now describe our approach to learning the  maximum-likelihood parameters of a naive Bayes model from data while eliminating discrimination patterns. 
A common approach to learning naive Bayes models with certain properties is to formulate it as an optimization problem of certain form, for which efficient solvers are available~\cite{KhosraviIJCAI19}.
We formulate the learning subject to fairness constraints as a signomial program, which has the following form:
\begin{align*}
    \text{minimize}\: f_0(x), \:
    \text{~~~s.t.~~~}\: f_i(x) \leq 1,\: ~~~g_j(x) = 1 \quad \forall\: i,j
\end{align*}
where each $f_i$ is signomial while $g_j$ is monomial. 
A \emph{signomial} is a function of the form $\sum_{k} c_{k}x_1^{a_{1k}}\cdots x_n^{a_{1n}}$ defined over real positive variables $x_1\ldots x_n$ where $c_k,a_{ij} \in \mathbb{R}$; a \emph{monomial} is of the form $c x_1^{a_1} \cdots x_n^{a_n}$ where $c > 0$ and $a_i \in \mathbb{R}$. 
Signomial programs are not globally convex, but a locally optimal solution can be computed efficiently, unlike the closely related class of geometric programs, for which the globally optimum can be found efficiently \cite{ecker1980geometric}.

\subsection{Parameter Learning with Fairness Constraints}

The likelihood of a Bayesian network given data \data is $P_\theta(\data)\!=\! \prod_i \theta_i^{n_i}$ where $n_i$ is the number of examples in \data that satisfy the assignment corresponding to parameter $\theta_i$. To learn the maximum-likelihood parameters, we minimize the inverse of likelihood which is a monomial: $\theta_{\text{ml}}\! =\! \argmin_\theta \prod_i \theta_i^{-n_i}$.
The parameters of a naive Bayes network with binary class consist of $\pa{d}{},\pa{\bar{d}}{}$, and $\pa{z}{d},\pa{z}{\bar{d}}$ for all $z$.

Next, we show the constraints for our optimization problem.
To learn a valid distribution, we need to ensure that probabilities are non-negative and sum to one. The former assumption is inherent to signomial programs. 
To enforce the latter, for each instantiation~$d$ and feature $Z$, we need that $\sum_z \pa{z}{d} = 1$, or as signomial inequality constraints: $\sum_z \pa{z}{d} \leq 1$ and $2 - \sum_z \pa{z}{d} \leq 1$.

Finally, we derive the constraints to ensure that a given pattern $\xs\ys$ is non-discriminating.
\begin{myprop}\label{prop:pattern-constraint}
    Let $P_\theta$ be a naive Bayes distribution over $D\cup\Zs$, and let $\xs$ and $\ys$ be joint assignments to $\Xs \subseteq \Ss$ and $\Ys \subseteq \Zs\setminus\Xs$. Then $\abs{\Delta_{P_\theta,d}(\xs,\ys)} \leq \delta$ for a threshold $\delta \in [0,1]$ iff the following holds:
    \begin{gather*}
        r_{\xs} = \frac{\prod_x\pa{x}{\bar{d}}}{\prod_x\pa{x}{d}}, \qquad r_{\ys}=\frac{\pa{\bar{d}}{} \prod_y\pa{y}{\bar{d}}}{\pa{d}{} \prod_y\pa{y}{d}}, \\
        \left(\frac{1-\delta}{\delta}\right) r_\xs r_\ys - \left(\frac{1+\delta}{\delta}\right) r_\ys - r_\xs r_\ys^2 \leq 1, \\
        -\left(\frac{1+\delta}{\delta}\right) r_\xs r_\ys + \left(\frac{1-\delta}{\delta}\right) r_\ys - r_\xs r_\ys^2 \leq 1.
    \end{gather*}
\end{myprop}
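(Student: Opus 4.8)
The plan is to work directly from the naive Bayes factorization and the definition of $\Delta$, reducing the inequality $\abs{\Delta_{P_\theta,d}(\xs,\ys)}\le\delta$ to the stated polynomial constraints through two algebraic transformations: first expressing the relevant conditional probabilities in terms of the ratios $r_\xs$ and $r_\ys$, then rewriting the absolute-value bound as the conjunction of two signomial inequalities.

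\medskip\noindent\textbf{Step 1: Express $P_\theta(d\given\xs\ys)$ and $P_\theta(d\given\ys)$ via the ratios.}
By the naive Bayes structure, $P_\theta(d\given\xs\ys) = P_\theta(d)\prod_x P_\theta(x\given d)\prod_y P_\theta(y\given d) / P_\theta(\xs\ys)$, and similarly for $\bar d$. Taking the ratio $P_\theta(\bar d\given\xs\ys)/P_\theta(d\given\xs\ys)$, the normalizer $P_\theta(\xs\ys)$ cancels and we get
\begin{align*}
  \frac{P_\theta(\bar d\given\xs\ys)}{P_\theta(d\given\xs\ys)}
  = \frac{\pa{\bar d}{}\prod_x\pa{x}{\bar d}\prod_y\pa{y}{\bar d}}
         {\pa{d}{}\prod_x\pa{x}{d}\prod_y\pa{y}{d}}
  = r_\xs\, r_\ys,
\end{align*}
using the definitions of $r_\xs$ and $r_\ys$ given in the statement. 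Likewise $P_\theta(\bar d\given\ys)/P_\theta(d\given\ys) = r_\ys$. Hence $P_\theta(d\given\xs\ys) = 1/(1+r_\xs r_\ys)$ and $P_\theta(d\given\ys) = 1/(1+r_\ys)$, so
\begin{align*}
  \Delta_{P_\theta,d}(\xs,\ys) = \frac{1}{1+r_\xs r_\ys} - \frac{1}{1+r_\ys}.
\end{align*}
This is the key reduction: the degree of discrimination depends on $\theta$ only through the two monomials $r_\xs,r_\ys$ (both of which are monomials in the parameters with positive coefficient $1$, as required for signomial programming).

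\medskip\noindent\textbf{Step 2: Turn the $\abs{\cdot}\le\delta$ bound into two signomial inequalities.}
The constraint $\abs{\Delta}\le\delta$ splits into $\Delta\le\delta$ and $-\Delta\le\delta$, i.e.
\begin{align*}
  \frac{1}{1+r_\xs r_\ys} - \frac{1}{1+r_\ys} \le \delta
  \quad\text{and}\quad
  \frac{1}{1+r_\ys} - \frac{1}{1+r_\xs r_\ys} \le \delta.
\end{align*}
Both denominators $1+r_\xs r_\ys$ and $1+r_\ys$ are positive (parameters are positive reals), so I can clear denominators by multiplying through by $(1+r_\xs r_\ys)(1+r_\ys)$ without flipping inequalities. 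Expanding and collecting terms, the left inequality becomes $(1-\delta)r_\xs r_\ys - (1+\delta)r_\ys - \delta\, r_\xs r_\ys^2 \le \delta$ (up to rearrangement), and dividing by $\delta$ — which requires $\delta>0$, a case I should note separately, though $\delta=0$ forces the trivial equality constraint — yields exactly the first displayed constraint $\left(\tfrac{1-\delta}{\delta}\right)r_\xs r_\ys - \left(\tfrac{1+\delta}{\delta}\right)r_\ys - r_\xs r_\ys^2 \le 1$. The second inequality is handled symmetrically and gives the second constraint. Since each step is reversible (clearing positive denominators, dividing by the positive constant $\delta$), the equivalence "iff" holds in both directions.

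\medskip\noindent\textbf{Main obstacle.}
The conceptual content is all in Step 1, and it is not hard; the real risk is bookkeeping in Step 2 — getting the sign of every term right when expanding $(1+r_\xs r_\ys)(1+r_\ys)$ and matching coefficients against the stated form, and remembering that introducing $r_\xs,r_\ys$ as new variables requires the two \emph{equality} (monomial) constraints $r_\xs = \prod_x\pa{x}{\bar d}/\prod_x\pa{x}{d}$ and $r_\ys = \pa{\bar d}{}\prod_y\pa{y}{\bar d} / (\pa{d}{}\prod_y\pa{y}{d})$ to be added to the program, which is why they appear as part of the proposition. A minor subtlety worth flagging: the term $-r_\xs r_\ys^2$ has a negative coefficient, so these are genuinely signomial (not posynomial/geometric-program) constraints, consistent with the paper's framing of the learning problem as a signomial program; I would make sure the direction of each inequality is preserved under every manipulation and explicitly dispatch the boundary case $\delta = 0$.
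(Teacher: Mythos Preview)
Your proposal is correct and follows essentially the same route as the paper's own proof: express $P_\theta(d\given\xs\ys)$ and $P_\theta(d\given\ys)$ via the naive Bayes factorization to obtain $\Delta = \tfrac{1}{1+r_\xs r_\ys} - \tfrac{1}{1+r_\ys}$, then clear the positive denominators, rearrange, and divide by $\delta$ to reach the two signomial inequalities. The only minor slip is that your ``left inequality'' $\Delta\le\delta$ actually yields the \emph{second} displayed constraint (and $-\Delta\le\delta$ the first), but since you treat them symmetrically this is harmless.
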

Note that above equalities and inequalities are valid signomial program constraints. Thus, we can learn the maximum-likelihood parameters of a naive Bayes network while ensuring a certain pattern is fair by solving a signomial program.
Furthermore, we can eliminate multiple patterns by adding the constraints in Proposition~\ref{prop:pattern-constraint} for each of them. However, learning a model that is entirely fair with this approach will introduce an exponential number of constraints. Not only does this make the optimization more challenging, but listing all patterns may simply be infeasible.

\subsection{Learning $\delta$-fair Parameters}

To address the aforementioned challenge of removing an exponential number of discrimination patterns, we propose an approach based on the \emph{cutting plane} method. That is, we iterate between \emph{parameter learning} and \emph{constraint extraction}, gradually adding fairness constraints to the optimization.
The parameter learning component is as described in the previous section, where we add the constraints of Proposition~\ref{prop:pattern-constraint} for each discrimination pattern that has been extracted so far.
For constraint extraction we use the top-$k$ pattern miner presented in Section~\ref{sec:most-disc-pattern}.
At each iteration, we learn the maximum-likelihood parameters subject to fairness constraints, and find $k$ more patterns using the updated parameters to add to the set of constraints in the next iteration. This process is repeated until the pattern miner finds no more discrimination pattern.

In the worst case, our algorithm may add exponentially many fairness constraints whilst solving multiple optimization problems. However, as we will later show empirically, we can learn a $\delta$-fair model by explicitly enforcing only a small fraction of fairness constraints. The efficacy of our approach depends on strategically extracting patterns that are significant in the overall distribution. Here, we again use a ranking by discrimination or divergence score, which we also evaluate empirically.

\subsection{Empirical Evaluation of $\delta$-fair Learner}
\label{sec:exp-learner}


We will now evaluate our iterative algorithm for learning $\delta$-fair naive Bayes models. We use the same datasets and hardware as in Section~\ref{sec:exp-miner}. To solve the signomial programs, we use \emph{GPkit}, which finds local solutions to these problems using a convex optimization solver as its backend.\footnote{We use Mosek (\url{www.mosek.com}) as backend.} Throughout our experiments, Laplace smoothing was used to avoid learning zero probabilities.

\begin{figure*}[tb]
    \begin{subfigure}[t]{0.66\textwidth}
        \centering
        \includegraphics[width=\linewidth]{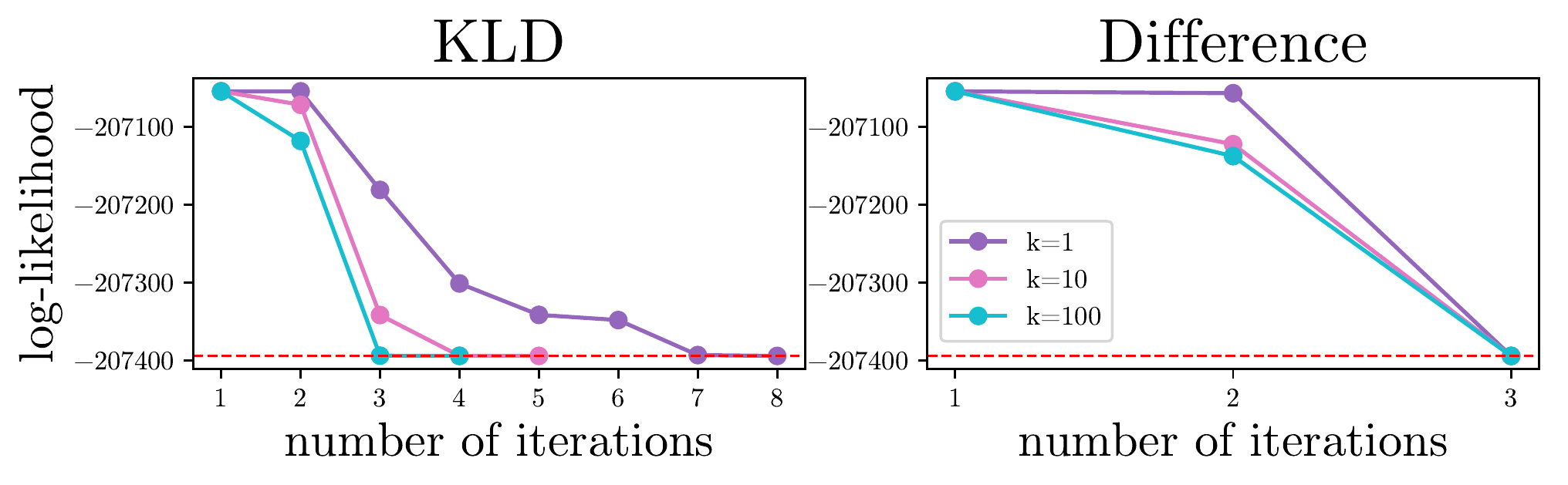}
        \caption{Log-likelihood}\label{fig:likelihood}
    \end{subfigure}
    \begin{subfigure}[t]{0.33\textwidth}
        \centering
        \includegraphics[width=\linewidth]{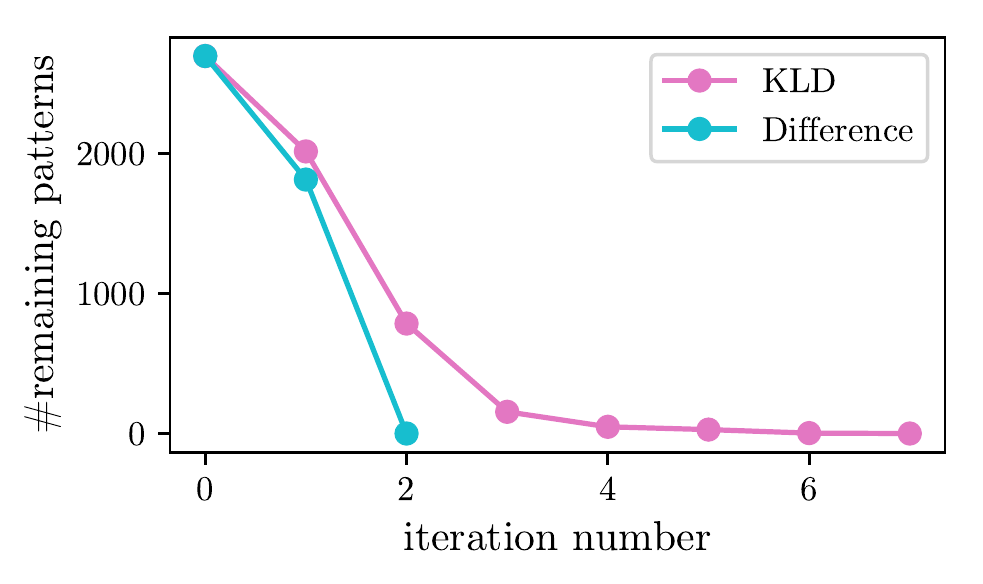}
        \caption{Number of remaining patterns}\label{fig:num_remaining}
    \end{subfigure}
    \caption{Log-likelihood and the number of remaining discrimination patterns after each iteration of learning on COMPAS dataset with $\delta=0.1$.}
\end{figure*}

\textbf{Q1. Can we learn a $\delta$-fair model in a small number of iterations while only asserting a small number of fairness constraints?}
We train a naive Bayes model on the COMPAS dataset subject to $\delta$-fairness constraints. Fig.~\ref{fig:likelihood} shows how the iterative method converges to a $\delta$-fair model, whose likelihood is indicated by the dotted line. 
Our approach converges to a fair model in a few iterations, including only a small fraction of the fairness constraints.
In particular, adding only the most discriminating pattern as a constraint at each iteration learns an entirely $\delta$-fair model with only three fairness constraints.\footnote{There are 2695 discrimination patterns w.r.t.\ unconstrained naive Bayes on COMPAS and $\delta=0.1$.}
Moreover, Fig.~\ref{fig:num_remaining} shows the number of remaining discrimination patterns after each iteration of learning with $k\!=\!1$.
Note that enforcing a single fairness constraint can eliminate a large number of remaining ones. Eventually, a few constraints subsume all discrimination patterns. 

We also evaluated our $\delta$-fair learner on the other two datasets; see Appendix for plots. We observed that more than a million discrimination patterns that exist in the unconstrained maximum-likelihood models were eliminated using a few dozen to, even in the worst case, a few thousand fairness constraints.
Furthermore, stricter fairness requirements (smaller $\delta$) tend to require more iterations, as would be expected. An interesting observation is that neither of the rankings consistently dominate the other in terms of the number of iterations to converge.

\begin{table}[tb]
    \centering
    \scalebox{0.8}{
    \begin{tabular}{l r r r }
    \toprule
        Dataset & Unconstrained & $\delta$-fair & Independent \\
        \midrule
       COMPAS &  -207,055& -207,395 & -208,639 \\
       Adult  &-226,375 & -228,763 &  -232,180 \\
       German &  -12,630 & -12,635 & -12,649\\ 
        \bottomrule
    \end{tabular}
    }
    \caption{Log-likelihood of models learned without fairness constraints, with the $\delta$-fair learner ($\delta=0.1$), and by making sensitive variables independent from the decision variable.}
    \label{tab:compare}
\end{table}

\textbf{Q2. How does the quality of naive Bayes models from our fair learner compare to ones that make the sensitive attributes independent of the decision? and to the best model without fairness constraints?}
A simple method to guarantee that a naive Bayes model is $\delta$-fair is to make all sensitive variables independent from the target value. An obvious downside is the negative effect on the predictive power of the model. We compare the models learned by our approach with:
(1) a maximum-likelihood model with no fairness constraints (unconstrained) and (2) a model in which the sensitive variables are independent of the decision variable, and the remaining parameters are learned using the max-likelihood criterion (independent).
These models lie at two opposite ends of the spectrum of the trade-off between fairness and accuracy. The $\delta$-fair model falls between these extremes, balancing approximate fairness and prediction power.

We compare the log-likelihood of these models, shown in Table~\ref{tab:compare}, as it captures the overall quality of a probabilistic classifier which can make predictions with partial observations.
The $\delta$-fair models achieve likelihoods that are much closer to those of the unconstrained models than the independent ones.
This shows that it is possible to enforce the fairness constraints without a major reduction in model quality.

\begin{table}[h]
    \centering
    \scalebox{0.8}{
    \begin{tabular}{lrrrrr}
    \toprule
     Dataset& $\lambda=$0.5 & $\lambda=$0.9 & $\lambda=$0.95 & $\lambda=$0.99 & $\lambda=$1.0 \\
    \midrule
    COMPAS & 2,504 & 2,471 & 2,470 & 3,069 & 0 \\
    Adult & \textgreater 1e6 & 661 & 652 & 605 & 0 \\
    German & \textgreater 1e6 & 3 & 2 & 0 & 0\\
    \bottomrule
    \end{tabular}
    }
    \caption{Number of remaining patterns with $\delta\!=\!0.1$ in naive Bayes models trained on discrimination-free data, where $\lambda$ determines the trade-off between fairness and accuracy in the data repair step~\cite{feldman2015certifying}.}
    \label{tab:remaining}
\end{table}

\textbf{Q3. Do discrimination patterns still occur when learning naive Bayes models from fair data?}
We first use the data repair algorithm proposed by \citet{feldman2015certifying} to remove discrimination from data, and learn a naive Bayes model from the repaired data. Table~\ref{tab:remaining} shows the number of remaining discrimination patterns in such model.
The results indicate that as long as preserving some degree of accuracy is in the objective, this method leaves lots of discrimination patterns, whereas our method removes all patterns.

\begin{table}[h]
\centering
\scalebox{0.8}{
\begin{tabular}{c c c c c}
    \toprule
    dataset & Unconstrained & 2NB & Repaired & $\delta$-fair \\
    \midrule
    COMPAS & 0.880 & 0.875 & 0.878 & 0.879\\
    Adult & 0.811 & 0.759 & 0.325 & 0.827\\
    German & 0.690  & 0.679 & 0.688 & 0.696\\
    \bottomrule
\end{tabular}
}
\caption{Comparing accuracy of our $\delta$-fair models with two-naive-Bayes method and a naive Bayes model trained on repaired, discrimination-free data.}
\label{tab:otherNB}
\end{table}
\textbf{Q4. How does the performance of $\delta$-fair naive Bayes classifier compare to existing work?}

Table~\ref{tab:otherNB} reports the 10-fold CV accuracy of our method ($\delta$-fair) compared to a max-likelihood naive Bayes model (unconstrained) and two other methods of learning fair classifiers: the two-naive-Bayes method (2NB)~\cite{calders2010three}, and a naive Bayes model trained on discrimination-free data using the repair algorithm of~\citet{feldman2015certifying} with $\lambda = 1$. 
Even though the notion of discrimination patterns was proposed for settings in which predictions are made with missing values, our method still outperforms other fair models in terms of accuracy, a measure better suited for predictions using fully-observed features.
Moreover, our method also enforces a stronger definition of fairness than the two-naive-Bayes method which aims to achieve statistical parity, which is subsumed by the notion of discrimination patterns.
It is also interesting to observe that our $\delta$-fair NB models perform even better than unconstrained NB models for the Adult and German dataset. Hence, removing discrimination patterns does not necessarily impose an extra cost on the prediction task.

\section{Related Work} \label{sec:related}

Most prominent definitions of fairness in machine learning can be largely categorized into \emph{individual fairness} and \emph{group fairness}. Individual fairness is based on the intuition that similar individuals should be treated similarly. For instance, the Lipschitz condition~\cite{dwork2012fairness} requires that the statistical distance between classifier outputs of two individuals are bounded by a task-specific distance between them.
As hinted to in Section~\ref{sec:problem}, our proposed notion of $\delta$-fairness satisfies the Lipschitz condition if two individuals who differ only in the sensitive attributes are considered similar, thus bounding the difference between their outputs by $\delta$. However, our definition cannot represent more nuanced similarity metrics that consider relationships between feature values.

Group fairness aims at achieving equality among populations differentiated by their sensitive attributes. An example of group fairness definition is statistical (demographic) parity, which states that a model is fair if the probability of getting a positive decision is equal between two groups defined by the sensitive attribute, i.e.\ $P(d \vert s)\!=\! P(d \vert \bar{s})$ where $d$ and $S$ are positive decision and sensitive variable, respectively.
Approximate measures of statistical parity include CV-discrimination score~\cite{calders2010three}: $P(d \vert s)\!-\!P(d \vert \bar{s})$; and disparate impact (or $p$\%-rule)~\cite{feldman2015certifying,zafar2017fairness}: $P(d \vert \bar{s})/P(d \vert s)$.
Our definition of $\delta$-fairness is strictly stronger than requiring a small CV-discrimination score, as a violation of (approximate) statistical parity corresponds to a discrimination pattern with only the sensitive attribute (i.e. empty $\ys$). 
Even though the $p$\%-rule was not explicitly discussed in this paper, our notion of discrimination pattern can be extended to require a small relative (instead of absolute) difference for partial feature observations (see Appendix for details). However, as a discrimination pattern conceptually represents an unfair treatment of an individual based on observing some sensitive attributes, using relative difference should be motivated by an application where the level of unfairness depends on the individual’s classification score.

Moreover, statistical parity is inadequate in detecting bias for subgroups or individuals. We resolve such issue by eliminating discrimination patterns for all subgroups that can be expressed as assignments to subsets of features.
In fact, we satisfy approximate statistical parity for any subgroup defined over the set of sensitive attributes, as any subgroup can be expressed as a union of joint assignments to the sensitive features, each of which has a bounded discrimination score.
\citet{kearns2017preventing} showed that auditing fairness at this arbitrary subgroup level (i.e. detecting \emph{fairness gerrymandering}) is computationally hard.

Other notions of group fairness include equalized true positive rates (equality of opportunity), false positive rates, or both (equalized odds~\cite{hardt2016equality}) among groups defined by the sensitive attributes. These definitions are ``oblivious'' to features other than the sensitive attribute, and focus on equalizing measures of classifier performance assuming all features are always observed. On the other hand, our method aims to ensure fairness when classifications may be made with missing features.
Moreover, our method still applies in decision making scenarios where a true label is not well defined or hard to observe.

Our approach differs from causal approaches to fairness~\cite{kilbertus2017avoiding,kusner2017counterfactual,russell2017worlds} which are more concerned with the causal mechanism of the real world that generated a potentially unfair decision, whereas we study the effect of sensitive information on a known classifier. 

There exist several approaches to learning fair naive Bayes models. First, one may modify the data to achieve fairness and use standard algorithms to learn a classifier from the modified data. For instance, \citet{kamiran2009classifying} proposed to change the labels for features near the decision boundary to achieve statistical parity, while the repair algorithm of \citet{feldman2015certifying} changes the non-sensitive attributes to reduce their correlation with the sensitive attribute. Although these methods have the flexibility of learning different models, we have shown empirically that a model learned from a fair data may still exhibit discrimination patterns.
On the other hand, \citet{calders2010three} proposed three different Bayesian network structures modified from a naive Bayes network in order to enforce statistical parity directly during learning. We have shown in the previous section that our method achieves better accuracy than their two-naive-Bayes method (which was found to be the best of three methods), while ensuring a stricter definition of fairness.
Lastly, one may add a regularizer during learning~\cite{kamishima2012fairness,zemel2013learning}, whereas we formulated to problem as constrained optimization, an approach often used to ensure fairness in other models~\cite{dwork2012fairness,kearns2017preventing}.

\section{Discussion and Conclusion}

In this paper we introduced a novel definition of fair probability distribution in terms of discrimination patterns which considers exponentially many (partial) observations of features. We have also presented algorithms to search for discrimination patterns in naive Bayes networks and to learn a high-quality fair naive Bayes classifier from data.
We empirically demonstrated the efficiency of our search algorithm and the ability to eliminate exponentially many discrimination patterns by iteratively removing a small fraction at a time.

We have shown that our approach of fair distribution implies group fairness such as statistical parity. However, ensuring group fairness in general is always with respect to a distribution and is only valid under the assumption that this distribution is truthful. While our approach guarantees some level of group fairness of naive Bayes classifiers, this is only true if the naive Bayes assumption holds. That is, the group fairness guarantees do not extend to using the classifier on an arbitrary population.

There is always a tension between three criteria of a probabilistic model: its fidelity, fairness, and tractability. Our approach aims to strike a balance between them by giving up some likelihood to be tractable (naive Bayes assumption) and more fair.
There are certainly other valid approaches: learning a more general graphical model to increase fairness and truthfulness, which would in general make it intractable, or making the model less fair in order to make it more truthful and tractable.

Lastly, real-world algorithmic fairness problems are only solved by domain experts understanding the process that generated the data, its inherent biases, and which modeling assumptions are appropriate. Our algorithm is only a tool to assist such experts in learning fair distributions: it can provide the domain expert with discrimination patterns, who can then decide which patterns need to be eliminated.

\subsubsection*{Acknowledgments}
This work is partially supported by NSF grants \#IIS-1633857, \#CCF-1837129, DARPA XAI grant \#N66001-17-2-4032, NEC Research, and gifts from Intel and Facebook Research.
Golnoosh Farnadi and Behrouz Babaki are supported by postdoctoral scholarships from IVADO through the Canada First Research Excellence Fund (CFREF) grant.

\bibliographystyle{aaai}
\bibliography{references}

\begin{thebibliography}{}

\bibitem[\protect\citeauthoryear{Calders and Verwer}{2010}]{calders2010three}
Calders, T., and Verwer, S.
\newblock 2010.
\newblock Three naive bayes approaches for discrimination-free classification.
\newblock {\em Data Mining and Knowledge Discovery} 21(2):277--292.

\bibitem[\protect\citeauthoryear{Chouldechova}{2017}]{chouldechova2017fair}
Chouldechova, A.
\newblock 2017.
\newblock Fair prediction with disparate impact: a study of bias in recidivism
  prediction instruments.
\newblock {\em Big data} 5(2):153--163.

\bibitem[\protect\citeauthoryear{Darwiche}{2009}]{darwiche2009modeling}
Darwiche, A.
\newblock 2009.
\newblock {\em Modeling and reasoning with Bayesian networks}.
\newblock Cambridge University Press.

\bibitem[\protect\citeauthoryear{Datta, Tschantz, and
  Datta}{2015}]{datta2015automated}
Datta, A.; Tschantz, M.~C.; and Datta, A.
\newblock 2015.
\newblock Automated experiments on ad privacy settings.
\newblock {\em Proceedings on Privacy Enhancing Technologies} 2015(1):92--112.

\bibitem[\protect\citeauthoryear{Dechter}{2013}]{dechter2013reasoning}
Dechter, R.
\newblock 2013.
\newblock Reasoning with probabilistic and deterministic graphical models:
  Exact algorithms.
\newblock {\em Synthesis Lectures on Artificial Intelligence and Machine
  Learning} 7(3):1--191.

\bibitem[\protect\citeauthoryear{Dwork \bgroup et al\mbox.\egroup
  }{2012}]{dwork2012fairness}
Dwork, C.; Hardt, M.; Pitassi, T.; Reingold, O.; and Zemel, R.
\newblock 2012.
\newblock Fairness through awareness.
\newblock In {\em Proceedings of the 3rd innovations in theoretical computer
  science conference},  214--226.
\newblock ACM.

\bibitem[\protect\citeauthoryear{Ecker}{1980}]{ecker1980geometric}
Ecker, J.~G.
\newblock 1980.
\newblock Geometric programming: methods, computations and applications.
\newblock {\em SIAM review} 22(3):338--362.

\bibitem[\protect\citeauthoryear{Farnadi, Babaki, and
  Getoor}{2018}]{farnadi2018fairness}
Farnadi, G.; Babaki, B.; and Getoor, L.
\newblock 2018.
\newblock Fairness in relational domains.
\newblock In {\em Proceedings of the 2018 AAAI/ACM Conference on AI, Ethics,
  and Society},  108--114.
\newblock ACM.

\bibitem[\protect\citeauthoryear{Feldman \bgroup et al\mbox.\egroup
  }{2015}]{feldman2015certifying}
Feldman, M.; Friedler, S.~A.; Moeller, J.; Scheidegger, C.; and
  Venkatasubramanian, S.
\newblock 2015.
\newblock Certifying and removing disparate impact.
\newblock In {\em Proceedings of the 21th ACM SIGKDD International Conference
  on Knowledge Discovery and Data Mining},  259--268.
\newblock ACM.

\bibitem[\protect\citeauthoryear{Hardt \bgroup et al\mbox.\egroup
  }{2016}]{hardt2016equality}
Hardt, M.; Price, E.; Srebro, N.; et~al.
\newblock 2016.
\newblock Equality of opportunity in supervised learning.
\newblock In {\em NeurIPS},  3315--3323.

\bibitem[\protect\citeauthoryear{Henderson \bgroup et al\mbox.\egroup
  }{2015}]{henderson2015credit}
Henderson, L.; Herring, C.; Horton, H.~D.; and Thomas, M.
\newblock 2015.
\newblock Credit where credit is due?: Race, gender, and discrimination in the
  credit scores of business startups.
\newblock {\em The Review of Black Political Economy} 42(4):459--479.

\bibitem[\protect\citeauthoryear{Kamiran and
  Calders}{2009}]{kamiran2009classifying}
Kamiran, F., and Calders, T.
\newblock 2009.
\newblock Classifying without discriminating.
\newblock In {\em 2009 2nd International Conference on Computer, Control and
  Communication},  1--6.
\newblock IEEE.

\bibitem[\protect\citeauthoryear{Kamishima \bgroup et al\mbox.\egroup
  }{2012}]{kamishima2012fairness}
Kamishima, T.; Akaho, S.; Asoh, H.; and Sakuma, J.
\newblock 2012.
\newblock Fairness-aware classifier with prejudice remover regularizer.
\newblock In {\em Joint European Conference on Machine Learning and Knowledge
  Discovery in Databases},  35--50.
\newblock Springer.

\bibitem[\protect\citeauthoryear{Kearns \bgroup et al\mbox.\egroup
  }{2018}]{kearns2017preventing}
Kearns, M.; Neel, S.; Roth, A.; and Wu, Z.~S.
\newblock 2018.
\newblock Preventing fairness gerrymandering: Auditing and learning for
  subgroup fairness.
\newblock In {\em ICML},  2564--2572.

\bibitem[\protect\citeauthoryear{Khosravi \bgroup et al\mbox.\egroup
  }{2019}]{KhosraviIJCAI19}
Khosravi, P.; Liang, Y.; Choi, Y.; and Van~den Broeck, G.
\newblock 2019.
\newblock What to expect of classifiers? reasoning about logistic regression
  with missing features.
\newblock In {\em IJCAI}.

\bibitem[\protect\citeauthoryear{Kilbertus \bgroup et al\mbox.\egroup
  }{2017}]{kilbertus2017avoiding}
Kilbertus, N.; Carulla, M.~R.; Parascandolo, G.; Hardt, M.; Janzing, D.; and
  Sch{\"o}lkopf, B.
\newblock 2017.
\newblock Avoiding discrimination through causal reasoning.
\newblock In {\em NeurIPS},  656--666.

\bibitem[\protect\citeauthoryear{Kisa \bgroup et al\mbox.\egroup
  }{2014}]{kisa2014probabilistic}
Kisa, D.; Van~den Broeck, G.; Choi, A.; and Darwiche, A.
\newblock 2014.
\newblock Probabilistic sentential decision diagrams.
\newblock In {\em KR}.

\bibitem[\protect\citeauthoryear{Kusner \bgroup et al\mbox.\egroup
  }{2017}]{kusner2017counterfactual}
Kusner, M.~J.; Loftus, J.; Russell, C.; and Silva, R.
\newblock 2017.
\newblock Counterfactual fairness.
\newblock In {\em NeurIPS},  4066--4076.

\bibitem[\protect\citeauthoryear{Larson \bgroup et al\mbox.\egroup
  }{2016}]{larson2016we}
Larson, J.; Mattu, S.; Kirchner, L.; and Angwin, J.
\newblock 2016.
\newblock How we analyzed the compas recidivism algorithm.
\newblock {\em ProPublica (5 2016)} 9.

\bibitem[\protect\citeauthoryear{Luong, Ruggieri, and
  Turini}{2011}]{luong2011k}
Luong, B.~T.; Ruggieri, S.; and Turini, F.
\newblock 2011.
\newblock k-nn as an implementation of situation testing for discrimination
  discovery and prevention.
\newblock In {\em Proceedings of the 17th ACM SIGKDD international conference
  on Knowledge discovery and data mining},  502--510.
\newblock ACM.

\bibitem[\protect\citeauthoryear{Mohler \bgroup et al\mbox.\egroup
  }{2018}]{mohler2018penalized}
Mohler, G.; Raje, R.; Carter, J.; Valasik, M.; and Brantingham, J.
\newblock 2018.
\newblock A penalized likelihood method for balancing accuracy and fairness in
  predictive policing.
\newblock In {\em 2018 IEEE International Conference on Systems, Man, and
  Cybernetics (SMC)},  2454--2459.
\newblock IEEE.

\bibitem[\protect\citeauthoryear{Poon and Domingos}{2011}]{poon2011sum}
Poon, H., and Domingos, P.
\newblock 2011.
\newblock Sum-product networks: a new deep architecture.
\newblock In {\em UAI},  337--346.

\bibitem[\protect\citeauthoryear{Rahman, Kothalkar, and
  Gogate}{2014}]{Rahman2014Cutset}
Rahman, T.; Kothalkar, P.; and Gogate, V.
\newblock 2014.
\newblock Cutset networks: A simple, tractable, and scalable approach for
  improving the accuracy of chow-liu trees.
\newblock In {\em ECML/PKDD}.

\bibitem[\protect\citeauthoryear{Russell \bgroup et al\mbox.\egroup
  }{2017}]{russell2017worlds}
Russell, C.; Kusner, M.~J.; Loftus, J.; and Silva, R.
\newblock 2017.
\newblock When worlds collide: integrating different counterfactual assumptions
  in fairness.
\newblock In {\em NeurIPS},  6414--6423.

\bibitem[\protect\citeauthoryear{Salimi \bgroup et al\mbox.\egroup
  }{2019}]{salimi2019interventional}
Salimi, B.; Rodriguez, L.; Howe, B.; and Suciu, D.
\newblock 2019.
\newblock Interventional fairness: Causal database repair for algorithmic
  fairness.
\newblock In {\em Proceedings of the 2019 International Conference on
  Management of Data},  793--810.
\newblock ACM.

\bibitem[\protect\citeauthoryear{Tramer \bgroup et al\mbox.\egroup
  }{2017}]{tramer2017fairtest}
Tramer, F.; Atlidakis, V.; Geambasu, R.; Hsu, D.; Hubaux, J.-P.; Humbert, M.;
  Juels, A.; and Lin, H.
\newblock 2017.
\newblock Fairtest: Discovering unwarranted associations in data-driven
  applications.
\newblock In {\em 2017 IEEE European Symposium on Security and Privacy
  (EuroS\&P)},  401--416.
\newblock IEEE.

\bibitem[\protect\citeauthoryear{Zafar \bgroup et al\mbox.\egroup
  }{2017a}]{zafar2017parity}
Zafar, M.~B.; Valera, I.; Gomez{-}Rodriguez, M.; Gummadi, K.~P.; and Weller, A.
\newblock 2017a.
\newblock From parity to preference-based notions of fairness in
  classification.
\newblock In {\em {NeurIPS}},  228--238.

\bibitem[\protect\citeauthoryear{Zafar \bgroup et al\mbox.\egroup
  }{2017b}]{zafar2017fairness}
Zafar, M.~B.; Valera, I.; Gomez~Rodriguez, M.; and Gummadi, K.~P.
\newblock 2017b.
\newblock Fairness constraints: Mechanisms for fair classification.
\newblock In {\em 20th International Conference on Artificial Intelligence and
  Statistics},  962--970.

\bibitem[\protect\citeauthoryear{Zemel \bgroup et al\mbox.\egroup
  }{2013}]{zemel2013learning}
Zemel, R.; Wu, Y.; Swersky, K.; Pitassi, T.; and Dwork, C.
\newblock 2013.
\newblock Learning fair representations.
\newblock In {\em ICML},  325--333.

\end{thebibliography}
\clearpage
\appendix
\allowdisplaybreaks

\section{Degree of Discrimination Bound}\label{sec:appx-diff-bound}

\subsection{Proof of Proposition~\ref{prop:diff-bound}}

We first derive how \Deltat represents the degree of discrimination $\Delta$ for some pattern $\xs\ys$. 
\begin{align*}
    &\Delta_{P,d}(\xs,\ys) 
    = P(d \given \xs\ys) - P(d \given \ys) \\
    &= \frac{P(\xs \given d) P(d \ys)}{P(\xs \given d) P(d \ys) + P(\xs \given \notd) P(\notd \ys)} - P(d \given \ys) \\
    &= \frac{P(\xs \given d) P(d \given \ys)}{P(\xs \given d) P(d \given \ys) + P(\xs \given \notd) P(\notd \given \ys)} - P(d \given \ys) \\
    &= \Deltat(P(\xs\given d), P(\xs\given\notd), P(d\given\ys))
\end{align*}

Clearly, if $l \leq \gamma \leq u$ then
$
    \min_{l \leq \gamma \leq u} \Deltat(\alpha,\beta,\gamma) \leq \Deltat(\alpha,\beta,\gamma) \leq \max_{l \leq \gamma \leq u} \Deltat(\alpha,\beta,\gamma).
$
Therefore, if $l \leq P(d \given \ys\ys^\prime) \leq u$, then the following holds for any $\xs$:
\begin{align*}
    &\min_{l \leq \gamma \leq u} \Deltat(P(\xs\given d),P(\xs\given\notd),\gamma) \\
    &\leq \Deltat(P(\xs\given d),P(\xs\given\notd),P(d \given \ys\ys^\prime))
    = \Delta_{P,d}(\xs,\ys\ys^\prime) \\
    &\leq \max_{l \leq \gamma \leq u}\Deltat(P(\xs\given d),P(\xs\given\notd),\gamma).
\end{align*}
Next, suppose $\xs_u^\prime = \argmax_{\xs^\prime} P(d \given \xs\xs^\prime)$ and $\xs_l^\prime = \argmin_{\xs^\prime} P(d \given \xs\xs^\prime)$. Then from Lemma~\ref{lem:nb-max}, we also have that $\xs_u^\prime = \argmax_{\xs^\prime} P(d \given \xs\xs^\prime \ys\ys^\prime)$ and $\xs_l^\prime = \argmin_{\xs^\prime} P(d \given \xs\xs^\prime \ys\ys^\prime)$ for any $\ys\ys^\prime$.
Therefore,
\begin{align*}
    &\min_{l\leq\gamma\leq u} \widetilde{\Delta}\left(P(\xs\xs_l^\prime\given d),P(\xs\xs_l^\prime\given\notd),\gamma\right) \\
    &\leq \Deltat\left( P(\xs\xs_l^\prime\given d),P(\xs\xs_l^\prime\given\notd),P(d \given \ys\ys^\prime) \right) \\
    &= \Delta_{P,d}(\xs\xs_l^\prime,\ys\ys^\prime)  \\
    &= P(d \given \xs\xs_l^\prime \ys\ys^\prime) - P(d \given \ys\ys^\prime) \\
    &\leq P(d \given \xs\xs^\prime \ys\ys^\prime) - P(d \given \ys\ys^\prime) \\
    &= \Delta_{P,d}(\xs\xs^\prime, \ys\ys^\prime) \\
    &\leq \Delta_{P,d}(\xs\xs_u^\prime, \ys\ys^\prime) \\
    &= \Deltat\left( P(\xs\xs_u^\prime\given d),P(\xs\xs_u^\prime\given\notd),P(d \given \ys\ys^\prime) \right)\\
    &\leq \max_{l\leq\gamma\leq u} \widetilde{\Delta}\left(P(\xs\xs_u^\prime\given d),P(\xs\xs_u^\prime\given\notd),\gamma\right).
\tag*{\qed}
\end{align*}

\subsection{Computing the Discrimination Bound}
If $\alpha=P(\xs\given d)=0$ and $\beta=P(\xs\given\notd)=0$, then the probability of $\xs$ is zero and thus $P(d \given \xs\ys)$ is ill-defined. Therefore, we will assume that either $\alpha$ or $\beta$ is nonzero.

Let us write $\Deltat_{\alpha,\beta}(\gamma) = \Deltat(\alpha,\beta,\gamma)$ to denote the function restricted to fixed $\alpha$ and $\beta$.
If $\alpha = \beta$, then $\Deltat_{\alpha,\beta} = 0$. Also, $\Deltat_{0,\beta}(\gamma) = -\gamma$ and $\Deltat_{\alpha,0}(\gamma) = 1 - \gamma$.
Thus, in the following analysis we assume $\alpha$ and $\beta$ are non-zero and distinct.

If $0 < \alpha \leq \beta \leq 1$, $\Deltat_{\alpha,\beta}$ is negative and convex in $\gamma$ within $0 \leq \gamma \leq 1$. On the other hand, if $0 < \beta \leq \alpha \leq 1$, then $\Deltat_{\alpha,\beta,\gamma}$ is positive and concave. 
This can quickly be checked using the following derivatives.
\begin{gather*}
    \frac{d}{d\gamma} \Deltat_{\alpha,\beta}(\gamma) = \frac{\alpha \beta}{\left(\alpha\gamma + \beta(1-\gamma)\right)^2} - 1, \\
    \frac{d^2}{d\gamma^2} \Deltat_{\alpha,\beta}(\gamma) = \frac{-2\alpha \beta(\alpha-\beta)}{\left(\alpha\gamma + \beta(1-\gamma)\right)^3}
\end{gather*}
Furthermore, the sign of the derivative at $\gamma = 0$ is different from that at $\gamma = 1$, and thus there must exist a unique optimum in $0 \leq \gamma \leq 1$.

Solving for $\frac{d}{d\gamma} \Deltat_{\alpha,\beta}(\gamma) = 0$, we get $\gamma = \frac{\beta \pm \sqrt{\alpha\beta}}{\beta - \alpha}$.
The solution corresponding to the feasible space $0 \leq \gamma \leq 1$ is:
$
    \gamma_{\text{opt}} = \frac{\beta - \sqrt{\alpha\beta}}{\beta - \alpha}.
$
The optimal value is derived as the following.
\begin{align*}
    &\Deltat_{\alpha,\beta}(\gamma_{\text{opt}})
    = \frac{\alpha\left(\frac{\beta - \sqrt{\alpha\beta}}{\beta-\alpha}\right)}{\left(\alpha-\beta\right) \left(\frac{\beta - \sqrt{\alpha\beta}}{\beta-\alpha}\right) + \beta} - \frac{\beta - \sqrt{\alpha\beta}}{\beta-\alpha} \\
    &= \frac{\alpha(\beta - \sqrt{\alpha\beta})}{\sqrt{\alpha\beta}(\beta-\alpha)} - \frac{\beta-\sqrt{\alpha\beta}}{\beta-\alpha} 
    = \frac{2\sqrt{\alpha\beta} - \alpha - \beta}{\beta - \alpha}
\end{align*}    

Next, suppose that the feasible space is restricted to $l \leq \gamma \leq u$. Then the optimal solution is: $\gamma_{\text{opt}}$ if $l \leq \gamma_{\text{opt}} \leq u$; $l$ if $\gamma_{\text{opt}} < l$; and $u$ if $\gamma_{\text{opt}} > u$.

\subsection{Proof of Lemma~\ref{lem:nb-max}}
Now we prove that we can maximize the posterior decision probability by maximizing each variable independently. It suffices to prove that for a single variable $V$ and all evidence $\ws$, $\argmax_v P(d \given v \ws) = \argmax_v \frac{P(v\given d)}{P(v\given\notd)}$. We first express $P(d \given v \ws)$ as the following:
\begin{align*}
    P(d \given v \ws)
    &= \frac{P(v \given d) P(d \given \ws)}{P(v \given d) P(d \given \ws) + P(v \given \notd) P(\notd \given \ws)} \\
    &= \frac{1}{1 + \frac{P(v \given \notd) P(\notd \given \ws)}{P(v \given d) P(d \given \ws)}}
\end{align*}
Then clearly,
\begin{align*}
    \argmax_v P(d \given v \ws)
    &= \argmin_v \frac{P(v \given \notd) P(\notd \given \ws)}{P(v \given d) P(d \given \ws)} \\
    &= \argmax_v \frac{P(v \given d)}{P(v \given \notd)}.
\tag*{\qed}
\end{align*}

\section{Divergence Score}

\subsection{Derivation of Equation~\ref{eq:kld-r}}\label{sec:appx-derive-kld}
We want to find the closed form solution of the optimization problem in Equation~\ref{eq:kld}.
Because $P$ and $Q$ differs only in two assignments, we can write the KL divergence as follows:
\begin{align*}
    &\KL\left(P \;\middle\|\; Q\right)
    = \sum_{d\zs} P(d\zs) \log \left(\frac{P(d\zs)}{Q(d\zs)} \right) \\
    &= P(d\xs\ys) \log\left( \frac{P(d\xs\ys)}{Q(d\xs\ys)} \right)
    + P(\notd\xs\ys) \log\left( \frac{P(\notd\xs\ys)}{Q(\notd\xs\ys)} \right)
\end{align*}

Let $r$ be the change in probability of $d\xs\ys$. That is, $r = Q(d\xs\ys) - P(d\xs\ys)$. For $Q$ to be a valid probability distribution, we must have $Q(d\xs\ys) + Q(\notd\xs\ys) = P(\xs\ys)$.
Then we have $Q(d\xs\ys) = P(d\xs\ys) + r$, and $Q(\notd\xs\ys) = P(\xs\ys) - Q(d\xs\ys) = P(\notd\xs\ys) - r$. We can then express the KL divergence between $P$ and $Q$ as a function of $P$ and $r$:
\begin{align*}
    g_{P,d,\xs,\ys}(r) \triangleq\:
    &P(d\xs\ys) \log\left( \frac{P(d\xs\ys)}{P(d\xs\ys)+r} \right) \\
    &+ P(\notd\xs\ys) \log\left( \frac{P(\notd\xs\ys)}{P(\notd\xs\ys)-r} \right)
\end{align*}

Moreover, the discrimination score of pattern $\xs\ys$ w.r.t $Q$ can be expressed using $P$ and $r$ as the following:
\begin{align*}
    &Q(d\given\xs\ys) - Q(d\given\ys)
    = \frac{P(d\xs\ys)+r}{P(\xs\ys)} - \frac{P(d\ys)+r}{P(\ys)} \\
    &= P(d\given\xs\ys) - P(d\given\ys) + r\left( \frac{1}{P(\xs\ys)} - \frac{1}{P(\ys)} \right) \\
    &= \Delta_{P,d}(\xs,\ys) + r\left( \frac{1}{P(\xs\ys)} - \frac{1}{P(\ys)} \right).
\end{align*}

The heuristic $\Div_{P,d,\delta}(\xs,\ys)$ is then written using $r$ as follows:
\begin{align}
    \min_r\: & g_{P,d,\xs,\ys}(r) \label{eq:h-as-g}\\
    \text{~s.t.~}\,\, & \abs{\Delta_{P,d}(\xs,\ys) + r\left( \frac{1}{P(\xs\ys)} - \frac{1}{P(\ys)} \right)} \leq \delta \nonumber \\
    & -P(d\xs\ys) \leq r \leq P(\notd\xs\ys) \nonumber
\end{align}

The objective function $g_{P,d,\xs,\ys}$ is convex in $r$ with its unconstrained global minimum at $r=0$. Note that this is a feasible point if and only if $\abs{\Delta_{P,d}(\xs,\ys)} \leq \delta$; in other words, when the pattern $\xs\ys$ is already fair.
Otherwise, the optimum must be either of the extreme points of the feasible space, whichever is closer to $0$.
The extreme points for the first set of inequalities are:
\begin{align*}
    r_1 = \frac{\delta - P(d\given\xs\ys) + P(d\given\ys)}{1/P(\xs\ys) - 1/P(\ys)}, \\
    r_2 = \frac{-\delta - P(d\given\xs\ys) + P(d\given\ys)}{1/P(\xs\ys) - 1/P(\ys)}.
\end{align*}
If $\Delta_{P,d}(\xs,\ys) > \delta$, then $r_2 \leq r_1 < 0$. In such case, $g(r_2) \geq g(r_1)$ and $-P(d\xs\ys) \leq r_1 \leq P(\notd\xs\ys)$ as shown below:
\begin{align*}
    r_1 &< 0 \leq P(\notd\xs\ys), \\
    -r_1 &= \frac{-\delta + P(d\given\xs\ys) - P(d\given\ys)}{1/P(\xs\ys) - 1/P(\ys)}
         \leq \frac{P(d\given\xs\ys) - P(d\given\ys)}{1/P(\xs\ys) - 1/P(\ys)} \\
         &\leq \frac{P(d\given\xs\ys) - P(d\xs\given\ys)}{1/P(\xs\ys) - 1/P(\ys)}
        = P(d\xs\ys)
\end{align*}
Similarly, if $\Delta_{P,d}(\xs,\ys) < -\delta$, then $r_1 \geq r_2 > 0$. Also, $g(r_1) \geq g(r_2)$ and $-P(d\xs\ys) \leq r_2 \leq P(\notd\xs\ys)$ as shown below:
\begin{align*}
    r_2 &> 0 \geq -P(d\xs\ys), \\
    r_2 &\leq \frac{-P(d\given\xs\ys) + P(d\given\ys)}{1/P(\xs\ys) - 1/P(\ys)} \\
        &\leq \frac{P(\notd\given\xs\ys) - P(\notd\given\ys)}{1/P(\xs\ys) - 1/P(\ys)}
        = P(\notd\xs\ys)
\end{align*}

Hence, the optimal solution $r^\star$ is
\begin{align*}
    r^\star = \begin{cases}
        0, & \text{if $\abs{\Delta_{P,d}(\xs,\ys)} \leq \delta$,} \\
        \frac{\delta - \Delta_{P,d}(\xs,\ys)}{ 1/P(\xs\ys) - 1/P(\ys)}, & \text{if $\Delta_{P,d}(\xs,\ys) > \delta$,} \\
        \frac{-\delta - \Delta_{P,d}(\xs,\ys)}{ 1/P(\xs\ys) - 1/P(\ys)}, & \text{if $\Delta_{P,d}(\xs,\ys) < -\delta$,}
    \end{cases}
\end{align*}
and the divergence score is $\Div_{P,d,\delta}(\xs,\ys) = g_{P,d,\xs,\ys}(r^\star)$.

\subsection{Upper Bounds on Divergence Score}

Here we present two upper bounds on the divergence score for pruning the search tree.
The first bound uses the observation that the hypothetical distribution $Q$ with $\Delta_{Q,d}(\xs,\ys)=0$ is always a feasible hypothetical fair distribution. 
\begin{myprop}\label{prop:kld-bound}
    Let $P$ be a Naive Bayes distribution over $D\cup\Zs$, and let $\xs$ and $\ys$ be joint assignments to $\Xs \subseteq \Ss$ and $\Ys \subseteq \Zs\setminus\Xs$.
    For all possible valid extensions $\xs^\prime$ and $\ys^\prime$, the following holds:
    \begin{align*}
        \Div_{P,d,\delta}(\xs\xs^\prime,\ys\ys^\prime)
        \leq~
        &P(d\xs\ys) \log \frac{\max_{\zs\models\xs\ys} P(d\given\zs)}{\min_{\zs\models\ys} P(d\given\zs)} \\
        &+ P(\notd\xs\ys) \log \frac{\max_{\zs\models\xs\ys} P(\notd\given\zs)}{\min_{\zs\models\ys} P(\notd\given\zs)}
    \end{align*}
\end{myprop}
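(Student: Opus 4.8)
The plan is to certify $\Div_{P,d,\delta}(\xs\xs^\prime,\ys\ys^\prime)$ from above by exhibiting one explicit feasible point of the program~\eqref{eq:kld}. Following the hint, let $Q^\star$ be the distribution that coincides with $P$ on every state except $d\,\xs\xs^\prime\ys\ys^\prime$ and $\notd\,\xs\xs^\prime\ys\ys^\prime$, on which it shuffles exactly enough mass to make $\Delta_{Q^\star,d}(\xs\xs^\prime,\ys\ys^\prime)=0$. This $Q^\star$ is always feasible for~\eqref{eq:kld}: it satisfies $P(d\zs)=Q^\star(d\zs)$ for all $d\zs\not\models\xs\xs^\prime\ys\ys^\prime$ by construction, and $\abs{\Delta_{Q^\star,d}(\xs\xs^\prime,\ys\ys^\prime)}=0\le\delta$. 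Hence $\Div_{P,d,\delta}(\xs\xs^\prime,\ys\ys^\prime)\le\KL(P\,\|\,Q^\star)$, and since $P$ and $Q^\star$ differ in only two coordinates and $Q^\star(\xs\xs^\prime\ys\ys^\prime)=P(\xs\xs^\prime\ys\ys^\prime)$, the KL divergence collapses to
\[
\KL(P\,\|\,Q^\star)=P(d\,\xs\xs^\prime\ys\ys^\prime)\log\frac{P(d\given\xs\xs^\prime\ys\ys^\prime)}{Q^\star(d\given\xs\xs^\prime\ys\ys^\prime)}+P(\notd\,\xs\xs^\prime\ys\ys^\prime)\log\frac{P(\notd\given\xs\xs^\prime\ys\ys^\prime)}{Q^\star(\notd\given\xs\xs^\prime\ys\ys^\prime)} .
\]
We may assume $\abs{\Delta_{P,d}(\xs\xs^\prime,\ys\ys^\prime)}>\delta$, since otherwise the score is $0$ and the (nonnegative) right-hand side dominates; in particular $\xs\xs^\prime$ then assigns at least one variable.

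Next I would write $Q^\star$ explicitly. Setting $q=Q^\star(d\,\xs\xs^\prime\ys\ys^\prime)$ and using $Q^\star(\xs\xs^\prime\ys\ys^\prime)=P(\xs\xs^\prime\ys\ys^\prime)$, $Q^\star(\ys\ys^\prime)=P(\ys\ys^\prime)$, and that every completion of $\ys\ys^\prime$ outside $\xs\xs^\prime$ keeps its $P$-mass, the equation $Q^\star(d\given\xs\xs^\prime\ys\ys^\prime)=Q^\star(d\given\ys\ys^\prime)$ is linear in $q$ and gives
\[
\beta\triangleq Q^\star(d\given\xs\xs^\prime\ys\ys^\prime)=\frac{\textstyle\sum_{\zs\models\ys\ys^\prime,\ \zs\not\models\xs\xs^\prime}P(d\zs)}{\textstyle\sum_{\zs\models\ys\ys^\prime,\ \zs\not\models\xs\xs^\prime}P(\zs)} .
\]
The denominator is positive (the index set is nonempty since $\xs\xs^\prime$ is nonempty, and all completions have positive $P$-mass), so $\beta$ is well defined; moreover $\beta$ is a convex combination of the numbers $\{P(d\given\zs):\zs\models\ys\ys^\prime,\ \zs\not\models\xs\xs^\prime\}$, hence $\beta\in[0,1]$ (so $Q^\star$ is a valid distribution) and, since $\{\zs:\zs\models\ys\ys^\prime\}\subseteq\{\zs:\zs\models\ys\}$, we get $\beta\ge\min_{\zs\models\ys}P(d\given\zs)$; symmetrically $Q^\star(\notd\given\xs\xs^\prime\ys\ys^\prime)=1-\beta\ge\min_{\zs\models\ys}P(\notd\given\zs)$. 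Likewise $P(d\given\xs\xs^\prime\ys\ys^\prime)$ is a convex combination of $\{P(d\given\zs):\zs\models\xs\xs^\prime\ys\ys^\prime\}$, so $P(d\given\xs\xs^\prime\ys\ys^\prime)\le\max_{\zs\models\xs\xs^\prime\ys\ys^\prime}P(d\given\zs)\le\max_{\zs\models\xs\ys}P(d\given\zs)$, and similarly for $\notd$.

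Finally I would split on the sign of $\Delta_{P,d}(\xs\xs^\prime,\ys\ys^\prime)$, which equals the sign of $P(d\given\xs\xs^\prime\ys\ys^\prime)-\beta$ because $P(d\given\ys\ys^\prime)$ is itself a convex combination of $P(d\given\xs\xs^\prime\ys\ys^\prime)$ and $\beta$ and therefore lies between them. If this sign is positive, then $P(\notd\given\xs\xs^\prime\ys\ys^\prime)<1-\beta$, so the second logarithm above is $\le0$ and dropping that term only enlarges the bound; combining $P(d\,\xs\xs^\prime\ys\ys^\prime)\le P(d\xs\ys)$ (since $d\,\xs\xs^\prime\ys\ys^\prime\models d\xs\ys$) with the numerator/denominator bounds from the previous paragraph — and noting the log is nonnegative since $\{\zs\models\xs\ys\}\subseteq\{\zs\models\ys\}$ — yields $\KL(P\,\|\,Q^\star)\le P(d\xs\ys)\log\bigl(\max_{\zs\models\xs\ys}P(d\given\zs)\big/\min_{\zs\models\ys}P(d\given\zs)\bigr)$, and re-adding the nonnegative $\notd$-term of the claimed right-hand side preserves the inequality. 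The negative-sign sub-case is symmetric (drop the first term instead), and the $\abs{\Delta}\le\delta$ case was already dispatched. The main obstacle is the middle step: pinning down the renormalized conditional $Q^\star(d\given\xs\xs^\prime\ys\ys^\prime)$ correctly and recognizing it as a convex combination of the $P(d\given\zs)$ over the \emph{complementary} completions, which is exactly what lets us squeeze it (and $P(d\given\xs\xs^\prime\ys\ys^\prime)$) between the extremes of $P(d\given\cdot)$ taken over the coarser patterns $\xs\ys$ and $\ys$; the surrounding sign and monotonicity bookkeeping is routine.
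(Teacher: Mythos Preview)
Your argument is correct and follows the paper's approach: both exhibit the feasible point $Q^\star$ with $\Delta_{Q^\star,d}(\xs\xs^\prime,\ys\ys^\prime)=0$, identify the resulting denominator as the ``complementary'' conditional $P(d\given\overline{\xs\xs^\prime}\,\ys\ys^\prime)$ (your $\beta$), and then squeeze numerator and denominator between extremes of $P(d\given\cdot)$ over full assignments. The only tactical difference is that where you split on the sign of $\Delta$ and discard the nonpositive log term, the paper instead lower-bounds $\beta$ by $\min_{\xs\xs^\prime}P(d\given\xs\xs^\prime\ys\ys^\prime)$ (and symmetrically $1-\beta$ by $\min_{\xs\xs^\prime}P(\notd\given\xs\xs^\prime\ys\ys^\prime)$), which makes \emph{both} log terms nonnegative at once and so dispenses with the case analysis.
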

\begin{proof}
    Consider the following point:
    \begin{equation*}
        r_0 = \frac{- P(d\given\xs\ys) + P(d\given\ys)}{1/P(\xs\ys) - 1/P(\ys)}.
    \end{equation*}
    First, we show that above $r_0$ is always a feasible point in Problem~\ref{eq:h-as-g}:
    \begin{align*}
        &\abs{\Delta_{P,d}(\xs,\ys) + r_0 \left( \frac{1}{P(\xs\ys)} - \frac{1}{P(\ys)} \right)} \\
        &= \abs{\Delta_{P,d}(\xs,\ys) - \Delta_{P,d}(\xs,\ys)} = 0 \leq \delta,
    \end{align*}
    \begin{align*}
        r_0 &= \frac{P(\notd\given\xs\ys) - P(\notd\given\ys)}{1/P(\xs\ys) - 1/P(\ys)} \\
            &\leq \frac{P(\notd\given\xs\ys) - P(\notd\xs\given\ys)}{1/P(\xs\ys) - 1/P(\ys)} 
            = P(\notd\xs\ys), \\
        -r_0 &= \frac{P(d\given\xs\ys) - P(d\given\ys)}{1/P(\xs\ys) - 1/P(\ys)} \\
             &\leq \frac{P(d\given\xs\ys) - P(d\xs\given\ys)}{1/P(\xs\ys) - 1/P(\ys)} 
             = P(d\xs\ys).
    \end{align*}
    
    Then the divergence score for any pattern must be smaller than $g_{P,d,\xs,\ys}(r_0)$:
    \begin{align*}
        &\Div_{P,d,\delta}(\xs,\ys)
        \leq g_{P,d,\xs,\ys}(r_0) \\
        &= P(d\xs\ys) \log \frac{P(d\given\xs\ys)}{P(d\given\overline{\xs}\ys)} + P(\notd\xs\ys) \log \frac{P(\notd\given\xs\ys)}{P(\notd\given\overline{\xs}\ys)} \\
        &\leq P(d\xs\ys) \log \frac{P(d\given\xs\ys)}{\min_\xs P(d\given\xs\ys)} + P(\notd\xs\ys) \log \frac{P(\notd\given\xs\ys)}{\min_\xs P(\notd\given\xs\ys)}.
    \end{align*}
    
    Here, we use $\overline{\xs}$ to mean that $\xs$ does not hold. In other words,
    \begin{align*}
        P(d \given \overline{\xs}\ys) &= \frac{P(d\ys) - P(d\xs\ys)}{P(\ys)-P(\xs\ys)}
        = \sum_{\xs} P(d \given \xs \ys) P(\xs \given \overline{\xs}\ys).
    \end{align*}
    
    We can then use this to bound the divergence score any pattern extended from $\xs\ys$:
    \begin{align*}
        &\Div_{P,d,\delta}(\xs\xs^\prime,\ys\ys^\prime) \\
        &\leq P(d\xs\xs^\prime\ys\ys^\prime) \log \frac{P(d\given\xs\xs^\prime\ys\ys^\prime)}{\min_{\xs\xs^\prime} P(d\given \xs\xs^\prime\ys\ys^\prime)} \\
        &\quad + P(\notd\xs\xs^\prime\ys\ys^\prime) \log \frac{P(\notd\given\xs\xs^\prime\ys\ys^\prime)}{\min_{\xs\xs^\prime} P(\notd\given\xs\xs^\prime\ys\ys^\prime)} \\
        &\leq P(d\xs\ys) \log \frac{\max_{\zs\models\xs\ys} P(d\given\zs)}{\min_{\zs\models\ys} P(d\given\zs)} \\
            &\quad+ P(\notd\xs\ys) \log \frac{\max_{\zs\models\xs\ys} P(\notd\given\zs)}{\min_{\zs\models\ys} P(\notd\given\zs)}.
    \end{align*}
\end{proof}

We can also bound the divergence score using the maximum and minimum possible discrimination scores shown in Proposition~\ref{prop:diff-bound}, in place of the current pattern's discrimination. Let us denote the bounds for discrimination score as follows:
\begin{align*}
    \overline{\Delta}(\xs,\ys) &= \max_{l\leq\gamma\leq u} \Deltat\left(P(\xs\xs_u^\prime\given d),P(\xs\xs_u^\prime\given\notd),\gamma\right), \\
    \underline{\Delta}(\xs,\ys) &= \min_{l\leq\gamma\leq u} \Deltat\left(P(\xs\xs_l^\prime\given d),P(\xs\xs_l^\prime\given\notd),\gamma\right).
\end{align*}
\begin{myprop}\label{prop:kld-diff-bound}
    Let $P$ be a Naive Bayes distribution over $D\cup\Zs$, and let $\xs$ and $\ys$ be joint assignments to $\Xs \subseteq \Ss$ and $\Ys \subseteq \Zs\setminus\Xs$.
    For all possible valid extensions $\xs^\prime$ and $\ys^\prime$, $\Div_{P,d,\delta}(\xs\xs^\prime,\ys\ys^\prime) \leq \max\left( g_{P,d,\xs\xs^\prime,\ys\ys^\prime}(r_u), g_{P,d,\xs\xs^\prime,\ys\ys^\prime}(r_l) \right)$ where
    \begin{align*}
        r_u = \frac{\delta - \overline{\Delta}(\xs,\ys)}{ 1/P(\xs\xs^\prime\ys\ys^\prime) - 1/P(\ys\ys^\prime)}, \\
        r_l = \frac{-\delta - \underline{\Delta}(\xs,\ys)}{ 1/P(\xs\xs^\prime\ys\ys^\prime) - 1/P(\ys\ys^\prime)}.
    \end{align*}
\end{myprop}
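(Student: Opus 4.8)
The plan is to reduce the claim to the one-variable closed form for the divergence score established in the derivation of Equation~\ref{eq:kld-r}, and then exploit convexity of that closed form together with the discrimination-score bounds of Proposition~\ref{prop:diff-bound}. Throughout, abbreviate $g := g_{P,d,\xs\xs^\prime,\ys\ys^\prime}$, write $\Delta := \Delta_{P,d}(\xs\xs^\prime,\ys\ys^\prime)$, and set $c := 1/P(\xs\xs^\prime\ys\ys^\prime) - 1/P(\ys\ys^\prime)$; note $c \ge 0$ since $\xs\xs^\prime\ys\ys^\prime \models \ys\ys^\prime$, and $c > 0$ whenever $\Delta \ne 0$.

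First I would recall from Section~\ref{sec:appx-derive-kld} the facts that do the heavy lifting: $g$ is convex with unconstrained global minimum $g(0) = 0$ (hence $g \ge 0$), it equals $+\infty$ outside the open interval $(-P(d\xs\xs^\prime\ys\ys^\prime),\, P(\notd\xs\xs^\prime\ys\ys^\prime))$, and consequently it is nonincreasing on $(-\infty,0]$ and nondecreasing on $[0,\infty)$; moreover $\Div_{P,d,\delta}(\xs\xs^\prime,\ys\ys^\prime) = g(r^\star)$, where $r^\star = 0$ if $\abs{\Delta} \le \delta$, $r^\star = (\delta - \Delta)/c$ if $\Delta > \delta$, and $r^\star = (-\delta - \Delta)/c$ if $\Delta < -\delta$. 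The points $r_u$ and $r_l$ in the statement are exactly what these last two formulas produce when $\Delta$ is replaced, respectively, by the upper bound $\overline{\Delta}(\xs,\ys)$ and the lower bound $\underline{\Delta}(\xs,\ys)$, which by Proposition~\ref{prop:diff-bound} satisfy $\underline{\Delta}(\xs,\ys) \le \Delta \le \overline{\Delta}(\xs,\ys)$ for every valid extension $\xs^\prime,\ys^\prime$.

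Next I would split into the three cases of this characterization. If $\abs{\Delta} \le \delta$, then $\Div_{P,d,\delta}(\xs\xs^\prime,\ys\ys^\prime) = g(0) = 0 \le \max(g(r_u), g(r_l))$ because $g \ge 0$. If $\Delta > \delta$, then $c > 0$ and $r^\star = (\delta - \Delta)/c < 0$; since $\overline{\Delta}(\xs,\ys) \ge \Delta > \delta$ we also have $r_u = (\delta - \overline{\Delta}(\xs,\ys))/c < 0$, and $\Delta \le \overline{\Delta}(\xs,\ys)$ gives $r_u \le r^\star$. So $r_u \le r^\star \le 0$, and monotonicity of $g$ on $(-\infty,0]$ yields $\Div_{P,d,\delta}(\xs\xs^\prime,\ys\ys^\prime) = g(r^\star) \le g(r_u) \le \max(g(r_u), g(r_l))$. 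The case $\Delta < -\delta$ is the mirror image: $c > 0$, $r^\star = (-\delta - \Delta)/c > 0$, $r_l = (-\delta - \underline{\Delta}(\xs,\ys))/c > 0$ because $\underline{\Delta}(\xs,\ys) \le \Delta < -\delta$, and $\Delta \ge \underline{\Delta}(\xs,\ys)$ gives $r^\star \le r_l$; monotonicity of $g$ on $[0,\infty)$ then gives $g(r^\star) \le g(r_l) \le \max(g(r_u), g(r_l))$.

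I expect the only genuinely delicate step to be this sign bookkeeping: verifying that $r^\star$ and the surrogate point $r_u$ (resp.\ $r_l$) lie on the same monotone branch of $g$, i.e.\ the same side of the minimizer $0$, and in the order needed to invoke monotonicity. That is exactly where the sign of $c$ and the inequality $\Delta \le \overline{\Delta}(\xs,\ys)$ (resp.\ $\Delta \ge \underline{\Delta}(\xs,\ys)$) from Proposition~\ref{prop:diff-bound} enter. The remaining subtleties are harmless: if $c = 0$ then $\Delta = 0$ and only the first case applies; if $r_u$ or $r_l$ falls outside the domain of $g$, then $g$ equals $+\infty$ there and the bound holds trivially; and positivity of the relevant joint probabilities (ensured here by Laplace smoothing) keeps $0$ in the interior of the domain of $g$ so that the one-sided monotonicity claims are valid.
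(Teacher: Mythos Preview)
Your argument is correct and follows essentially the same route as the paper's proof: a three-case split on the sign of $\Delta$ relative to $\delta$, combined with Proposition~\ref{prop:diff-bound} to place $r_u$ (resp.\ $r_l$) on the correct side of $r^\star$, and convexity of $g$ with minimizer $0$ to conclude. Your version is more explicit about the edge cases (the sign of $c$, the $c=0$ degeneracy, and domain issues for $g$), but the structure and key steps are identical.
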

\begin{proof}
    The proof proceeds by case analysis on the discrimination score of extended patterns $\xs\xs^\prime\ys\ys^\prime$.
    
    First, if $\abs{\Delta(\xs\xs^\prime,\ys\ys^\prime)} \leq \delta$, $\Div_{P,d,\delta}(\xs\xs^\prime,\ys\ys^\prime) = 0$ which is the global minimum, and thus is smaller than both $g(r_u)$ and $g(r_l)$.
    
    Next, suppose $\Delta(\xs\xs^\prime,\ys\ys^\prime) > \delta$. Then from Proposition~\ref{prop:diff-bound},
    \begin{align*}
        &r_u = \frac{\delta - \overline{\Delta}(\xs,\ys)}{ 1/P(\xs\xs^\prime\ys\ys^\prime) - 1/P(\ys\ys^\prime)} \\
        &\leq
        r^\star = \frac{\delta - \Delta_{P,d}(\xs\xs^\prime,\ys\ys^\prime)}{ 1/P(\xs\xs^\prime\ys\ys^\prime) - 1/P(\ys\ys^\prime)}
        < 0.
    \end{align*}
    As $g$ is convex with its minimum at 0, we can conclude $\Div_{P,d,\delta}(\xs\xs^\prime,\ys\ys^\prime) = g(r^\star) \leq g(r_u)$.
    
    Finally, if $\Delta(\xs\xs^\prime,\ys\ys^\prime) < -\delta$, we have
    \begin{align*}
        &r_l = \frac{-\delta - \underline{\Delta}(\xs,\ys)}{ 1/P(\xs\xs^\prime\ys\ys^\prime) - 1/P(\ys\ys^\prime)} \\
        &\geq
        r^\star = \frac{-\delta - \Delta_{P,d}(\xs\xs^\prime,\ys\ys^\prime)}{ 1/P(\xs\xs^\prime\ys\ys^\prime) - 1/P(\ys\ys^\prime)}
        > 0.
    \end{align*}
    Similarly, this implies $\Div_{P,d,\delta}(\xs\xs^\prime,\ys\ys^\prime) = g(r^\star) \leq g(r_l)$. Because the divergence score is always smaller than either $g(r_u)$ or $g(r_l)$, it must be smaller than $\max(g(r_u), g(r_l))$.
\end{proof}

Lastly, we show how to efficiently compute an upper bound on $g_{P,d,\xs\xs^\prime,\ys\ys^\prime}(r_u)$ $g_{P,d,\xs\xs^\prime,\ys\ys^\prime}(r_l)$ from Proposition~\ref{prop:kld-diff-bound} for all patterns extended from $\xs\ys$. This is necessary for pruning during the search for discrimination patterns with high divergence scores.
First, note that $r_u$ and $r_l$ can be expressed as
\begin{equation}
    \frac{c}{ 1/P(\xs\xs^\prime\ys\ys^\prime) - 1/P(\ys\ys^\prime)}, 
\end{equation}
where $c = \delta - \overline{\Delta}(\xs,\ys)$ for $r_u$ and $c = -\delta - \underline{\Delta}(\xs,\ys)$ for $r_l$. Hence, it suffices to derive the following bound.
\begin{align*}
    &g_{P,d,\xs\xs^\prime,\ys\ys^\prime}\left(\frac{c}{ 1/P(\xs\xs^\prime\ys\ys^\prime) - 1/P(\ys\ys^\prime)}\right) \\
    &= P(d\xs\xs^\prime\ys\ys^\prime) \log\left(\frac{P(d\xs\xs^\prime\ys\ys^\prime)}{P(d\xs\xs^\prime\ys\ys^\prime) + \frac{c}{ 1/P(\xs\xs^\prime\ys\ys^\prime) - 1/P(\ys\ys^\prime)}}\right) \\
        &\quad+ P(\notd\xs\xs^\prime\ys\ys^\prime) \log\left(\frac{P(\notd\xs\xs^\prime\ys\ys^\prime)}{P(\notd\xs\xs^\prime\ys\ys^\prime) - \frac{c}{ 1/P(\xs\xs^\prime\ys\ys^\prime) - 1/P(\ys\ys^\prime)}}\right) \\
    &= P(d\xs\xs^\prime\ys\ys^\prime) \log\left(\frac{P(d\given\xs\xs^\prime\ys\ys^\prime)(1-P(\xs\xs^\prime\given\ys\ys^\prime))}{P(d\given\xs\xs^\prime\ys\ys^\prime)(1-P(\xs\xs^\prime\given\ys\ys^\prime)) + c}\right) \\
        &\quad+ P(\notd\xs\xs^\prime\ys\ys^\prime) \log\left(\frac{P(\notd\given\xs\xs^\prime\ys\ys^\prime)(1-P(\xs\xs^\prime\given\ys\ys^\prime))}{P(\notd\given\xs\xs^\prime\ys\ys^\prime)(1-P(\xs\xs^\prime\given\ys\ys^\prime)) - c}\right) \\
    &\leq \begin{cases}
        0 &\text {if $c = 0$} \\
        P(d\xs\ys)\log \frac{(\max_{\zs\models\xs\ys}P(d\given\zs))(1-\min_{\xs^\prime\ys^\prime}P(\xs\xs^\prime\given\ys\ys^\prime))}{(\min_{\zs\models\xs\ys}P(d\given\zs))(1-\max_{\xs^\prime\ys^\prime}P(\xs\xs^\prime\given\ys\ys^\prime))+c} &\text{if $c < 0$} \\
        P(\notd\xs\ys)\log \frac{(\max_{\zs\models\xs\ys}P(\notd\given\zs))(1-\min_{\xs^\prime\ys^\prime}P(\xs\xs^\prime\given\ys\ys^\prime))}{(\min_{\zs\models\xs\ys}P(\notd\given\zs))(1-\max_{\xs^\prime\ys^\prime}P(\xs\xs^\prime\given\ys\ys^\prime))-c} &\text{if $c > 0$}
    \end{cases}
\end{align*}

\section{Proof of Proposition~\ref{prop:pattern-constraint}}
The probability values of positive decision in terms of naive Bayes parameters $\theta$ are as follows:
\begin{align*}
    P_\theta(\dec \given \xs\ys)
    &= \frac{P_\theta(\dec\xs\ys)}{P_\theta(\xs\ys)} \\
    &= \frac{\pa{d}{} \prod_x\pa{x}{d} \prod_y\pa{y}{d}}{\pa{d}{} \prod_x\pa{x}{d} \prod_y\pa{y}{d} + \pa{\bar{d}}{} \prod_x\pa{x}{\bar{d}} \prod_y\pa{y}{\bar{d}}} \\
    &= \frac{1}{1+\frac{\pa{\bar{d}}{} \prod_x\pa{x}{\bar{d}} \prod_y\pa{y}{\bar{d}}}{\pa{d}{} \prod_x\pa{x}{d} \prod_y\pa{y}{d}}}, \\
    P_\theta(\decc \given \ys)
    &= \frac{P_\theta(\dec\ys)}{P_\theta(\ys)}
    = \frac{1}{1+\frac{\pa{\bar{d}}{} \prod_y\pa{y}{\bar{d}}}{\pa{d}{} \prod_y\pa{y}{d}}}.
\end{align*}
For simplicity of notation, let us write:
\begin{equation}
    r_{\xs} = \frac{\prod_x\pa{x}{\bar{d}}}{\prod_x\pa{x}{d}}, \quad r_{\ys}=\frac{\pa{\bar{d}}{} \prod_y\pa{y}{\bar{d}}}{\pa{d}{} \prod_y\pa{y}{d}}. \label{eq:aux-r}
\end{equation}
Then the degree of discrimination is $\Delta_{P_\theta,d}(\xs,\ys) = P_\theta(\dec \given \xs\ys) - P_\theta(\dec \given \ys) = \frac{1}{1+r_\xs r_\ys} - \frac{1}{1+r_\ys}$. 
Now we express the fairness constraint $\abs{\Delta_{P_\theta,d}(\xs,\ys)} \leq \delta$ as the following two inequalities:
\begin{align*}
    -\delta \leq \frac{(1+r_\ys)-(1+r_\xs r_\ys)}{(1+r_\xs r_\ys)\cdot(1+r_\ys)} \leq \delta.
\end{align*}
After simplifying,
\begin{align*}
    r_\ys - r_\xs r_\ys & \geq -\delta ( 1 + r_\xs r_\ys + r_\ys + r_\xs r_\ys^2 ), \\
    r_\ys - r_\xs r_\ys & \leq \delta ( 1 + r_\xs r_\ys + r_\ys + r_\xs r_\ys^2 ).
\end{align*}
We further express this as the following two signomial inequality constraints:
\begin{gather*}
    \left(\frac{1-\delta}{\delta}\right) r_\xs r_\ys - \left(\frac{1+\delta}{\delta}\right) r_\ys - r_\xs r_\ys^2 \leq 1, \\
    -\left(\frac{1+\delta}{\delta}\right) r_\xs r_\ys + \left(\frac{1-\delta}{\delta}\right) r_\ys - r_\xs r_\ys^2 \leq 1 \label{eq:fairconstraint}
\end{gather*}
Note that $r_\xs$ and $r_\ys$ according to Equation~\ref{eq:aux-r} are monomials of $\theta$, and thus above constraints are also signomial with respect to the optimization variables $\theta$. \qed

\section{Additional Experiments}\label{sec:appx-experiments}
\begin{figure*}[tb]
    \centering
    \includegraphics[width=\linewidth]{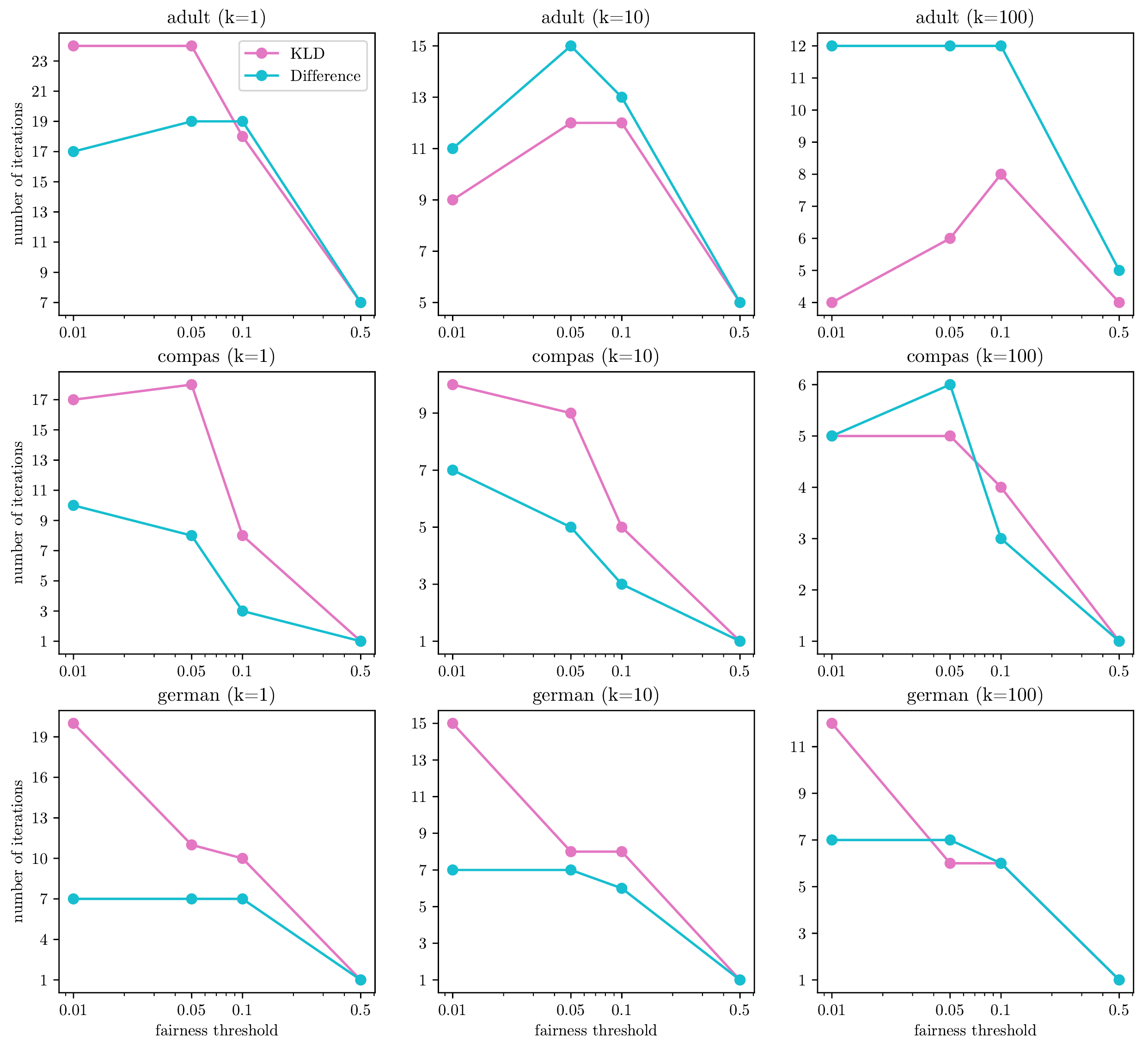}
    \caption{Number of iterations of $\delta$-fair learner until convergence}
    \label{fig:iterations}
\end{figure*}
Here we present the full set of experiments referred to in Q1 of Section~\ref{sec:exp-learner}.
\end{document}